\newtheorem{theorem}{Theorem}
\newtheorem{proof}{Proof}
\icmltitlerunning{Black-Box Optimization in Machine Learning with Trust Region Based Derivative Free Algorithm}
\begin{document} 

\twocolumn[
\icmltitle{ Black-Box Optimization in Machine Learning \\ with Trust Region Based Derivative Free Algorithm}

% It is OKAY to include author information, even for blind
% submissions: the style file will automatically remove it for you
% unless you've provided the [accepted] option to the icml2017
% package.

% list of affiliations. the first argument should be a (short)
% identifier you will use later to specify author affiliations
% Academic affiliations should list Department, University, City, Region, Country
% Industry affiliations should list Company, City, Region, Country

% you can specify symbols, otherwise they are numbered in order
% ideally, you should not use this facility. affiliations will be numbered
% in order of appearance and this is the preferred way.
\icmlsetsymbol{equal}{*}

\begin{icmlauthorlist}
\icmlauthor{Hiva Ghanbari}{equal,to}
\icmlauthor{Katya Scheinberg}{equal,to}
\end{icmlauthorlist}

\icmlaffiliation{to}{Lehigh University, Bethlehem, PA, USA}

\icmlcorrespondingauthor{Hiva Ghanbari}{hiva.ghanbari@gmail.com}
\icmlcorrespondingauthor{Katya Scheinberg}{katyascheinberg@gmail.com}

% You may provide any keywords that you 
% find helpful for describing your paper; these are used to populate 
% the "keywords" metadata in the PDF but will not be shown in the document
\icmlkeywords{Black-Box Optimization, DFO, Bayesian Optimization, AUC, RBF-Kernel SVM}

\vskip 0.3in
]

% this must go after the closing bracket ] following \twocolumn[ ...

% This command actually creates the footnote in the first column
% listing the affiliations and the copyright notice.
% The command takes one argument, which is text to display at the start of the footnote.
% The \icmlEqualContribution command is standard text for equal contribution.
% Remove it (just {}) if you do not need this facility.

\printAffiliationsAndNotice{}  % leave blank if no need to mention equal contribution
%\printAffiliationsAndNotice{\icmlEqualContribution} % otherwise use the standard text.
%\footnotetext{hi}

\begin{abstract} 
In this work, we utilize a Trust Region based Derivative Free Optimization (DFO-TR) method to directly maximize the Area Under Receiver Operating Characteristic Curve (AUC), which is a nonsmooth, noisy function. We show that AUC is a smooth function, in expectation, if the distributions of the positive and negative data points obey a jointly normal distribution. The practical performance of this algorithm  is compared to three prominent Bayesian optimization methods and random search. The presented numerical results show that DFO-TR surpasses Bayesian optimization and random search on various black-box optimization problem, such as maximizing AUC and hyperparameter tuning.
\end{abstract} 

%**********
% Sections
%**********
%Introduction
\section{Introduction}
Most machine  learning (ML) models rely on optimization tools to perform training. Typically these models are formed so that at least stochastic estimates of the gradient can be computed; for example, when optimizing least squares or logistic loss of a neural network on a given data set. Lately, however, with the increasing need to tune hyperparameters of  ML models, black-box optimization methods have been given significant consideration. These methods do not rely on any explicit gradient computation, but assume that only function values can be computed, usually with noise. 

There are two relatively independent directions of research for black-box optimization--Bayesian Optimization (BO) \cite{mockus,brochu}, predominantly popular in the ML community, and derivative free optimization (DFO) \cite{conn}--popular in the optimization community. There are other classes of methods for black-box optimization  developed in the fields of simulation optimization and engineering, but they are more specialized and we will not focus on them here. 

Both BO and DFO methods are usually applied to functions that are not known to be convex. The key difference between the BO and DFO methods, is that BO methods always contain a component that aims at the exploration of the space, hence seeking a global solution, while DFO methods are content with a local optimum. However, it has been shown in DFO literature \cite{JJMore} that DFO methods tend to escape shallow local minima and are quite well suited for problems with a few well defined local basins (and possibly many small local basins that appear due to noise). 

BO until recently have been established  as the method of choice for hyperparameter optimization (HPO). While BO methods have been shown to be effective at finding good solutions (not always globally optimal, as that can only be achieved in the limit), their efficiency slows down significantly as the number of iterations grows. Overall, the methods are quite computationally costly and scale poorly with the number of hyperparameters. Recently, the BO efficiency has been called into question in comparison with a simple random search \cite{LLi}, whose iterations require nothing, but function evaluations. Moreover, some improvements on random search have been proposed to incorporate cheeper function evaluations and  further increase its efficiency for HPO. 

In this paper, we will explore properties of an efficient class of DFO methods--model-based trust region methods--in application to problems in ML. We will show that these methods can be more efficient than BO and random search, especially for problems of  dimensions higher than 2 or 3. 
In the specific case of HPO,  hyperparameters can be continuous, discrete or categorical. While some DFO methods have been developed for  the case of optimization over categorical or binary variables, these methods  essentially rely on local search heuristics and we do not consider them here. 
Our goal is to examine, in detail, the behavior of various black-box methods in a purely continuous setting. We also aim to explore practical scalability of the methods with respect to the dimension of the search space and nonlinearity of the function. While we will list some experiments on HPO problems,
these problems are limited to three continuous hyperparameters. Hence, to perform our comparison on problems of larger dimension, we mainly focus on a different problem--optimizing Area Under Receiver Operating Characteristic (ROC) Curve (AUC) \cite{hanley}, over a set of linear classifiers.

AUC is a widely used measure for learning with imbalanced data sets, which are dominant in ML applications.
%The {ROC} curve was originally developed in the signal detection theory \cite{egan}, and over the recent years, has become popular in machine learning and data mining communities. The ROC curve represents a trade-off between the true positive and the false positive rate of a classifier and the AUC is defined as the area under this curve, which is a real number between $0$ and $1$. 
%
%Given a positive class $\mathcal{S}_+:=\{x_i^+:i = 1, \ldots, N_+ \}$, negative class $\mathcal{S}_-:=\{x_j^-:j = 1, \ldots, N_- \}$, and a  classifier $p(w,x)$, parametrized by $w\in \mathbb{R}^d$, the AUC value can be computed via {``Wilcoxon-Man-Whitney"} statistic, presented in \cite{mann}, as
%\begin{equation} \label{AUC0}
%F_{AUC}(f) = \frac{\sum_{i=1}^{N_+} \sum_{j=1}^{N_-} \mathbb{I}_p (w,x_i^+ , x_j^-)}{N_+  N_-},
%\end{equation}
%where $\mathbb{I}_f$ is an indicator function, so that
% \begin{equation} \label{step0}
% \mathbb{I}_p(w,x_i^+,x_j^-) = \begin{cases} +1, & \text {if} ~~ p(w,x_i ^+)> p(w,x_j^-),\\ 0, & \text {otherwise.} \end{cases}
%\end{equation}
Various results has been reported in terms of comparing the AUC value as a performance measure of a classifier versus the usual prediction accuracy, that is the total percentage of misclassified examples \cite{bradley,flach,scheffer,tang2}. In particular, in \cite{bradley}, AUC is used to evaluate the performance of some ML algorithms such as decision trees, neural networks, and some statistical methods, where, experimentally, it is shown that AUC has advantages over the accuracy. 
%In \cite{huang}, the probabilistic version of ``consistency" and ``discriminancy" of two measures in evaluating the performance of a classification algorithm have been studied. Based on this probabilistic setting, it is shown that AUC is consistent and more discriminating than the accuracy. 
%This result is only based on the case of balanced data sets, where each classifier is required to predict exactly half of the examples as the positive class and the other half as the negative class. 
In \cite{cortes}, a statistical analysis of the relationship between AUC and the error rate, including the expected value and the variance of AUC for a fixed error rate, has been presented. Their results show that the average AUC value monotonically increases with the classification accuracy, but in the case of uneven class distributions, the variance of AUC can be large. This observation implies that the classifiers with the same fixed low accuracy may have noticeably different AUC values. Therefore, optimizing AUC value directly may be desirable, however doing so using gradient-based optimization techniques is not feasible, because this function is a discontinuous step function, hence its gradients are either zero or undefined.

%This motivates the use of algorithms designed to directly optimize the AUC rather than algorithms which minimize the error rate.

%Based on some advantages of AUC as a performance measurement, few classification schemes have been designed to maximize AUC instead of overall accuracy. In \cite{flach}, a new splitting criteria for decision tree construction has been proposed which chooses the split with the highest local AUC value. In \cite{scheffer}, the concept of AUC Support Vector Machine (AUC SVM) has been proposed and the experiments with different types of kernels show that the AUC SVM produces higher AUC value than the classic SVM. In \cite{tang}, the Self-adaptive Differential Evolution with Neighborhood Search (SaNSDE) algorithm has been proposed to optimize the weights of neural networks with respect to AUC. Moreover, the empirical studies show that the proposed algorithm trains neural network with larger AUC than existing methods. 

%  The main challenge of directly minimizing the {ranking loss} is due to its discontinuity, resulting from the indicator function $\mathbb{I}_f$. 
%Thus, its gradient is either zero or is not defined  and consequently the gradient based optimization methods can not be applied. 
This difficulty motivates various state-of-the-art techniques optimizing an approximation of this discontinuous loss function. In \cite{mozer, raskutti,calders}, various smooth nonconvex approximation  of AUC has been optimized by the gradient descent method. Alternatively, a  {\em ranking loss}, which is defined as $1-$AUC value, is minimized  approximately, by replacing it with the  pairwise margin loss, such as exponential loss, logistic loss, and hinge loss \cite{Joachims,steck,rudin,jin}, which results in a convex problem. In terms of computational effort, each iteration of the gradient descent algorithm applied to the pairwise exponential or logistic loss has quadratic computational complexity with the number of training samples $N$. However, computing the gradient of the pairwise hinge loss can be done by a method with the reduced complexity of $\mathcal{O}(N \log N )$. The same method can be applied to compute the AUC value itself, which we utilize in our approach. 
In this work, we apply a variant of a model-based trust region derivative free method, called DFO-TR, \cite{conn} to directly maximize the AUC function over a set of linear classifiers, without using any hyperparameters. 
 We note that in HPO the black-box function is often the validation error or accuracy achieved by the classifier trained using some given set of hyperparameters. Hence, like AUC this function is often piecewise constant and discontinuous. Thus, we believe that optimizing AUC directly and HPO in continuous domains have many common properties as black-box optimization problems. The main goal of this work is to demonstrate the advantages of the DFO-TR framework over other black-box optimization algorithms, such as Bayesian optimization and random search for various ML applications. 
 
 Bayesian optimization is known in the ML community as a powerful tool for optimizing nonconvex objective functions, which are expensive to evaluate, and whose derivatives are not accessible.  In terms of required number of objective function evaluations, Bayesian optimization methods are considered to be some of the most efficient techniques \cite{mockus,Jones,brochu} for black-box problems of low effective dimensionality. 
 %Moreover, characteristics such as low effective dimensionality \cite{bergstra,chen,wang} and problem variants \cite{bardenet} are the properties of the interest in Bayesian optimization. 
In theory, Bayesian optimization methods seek global optimal solution, due to  their sampling schemes, which  trade-off between exploitation and exploration \cite{brochu,hutter}. Specifically, Bayesian optimization methods construct a probabilistic model by using point evaluations of the true function. Then, by using this model, the subsequent configurations of the parameters will be selected \cite{brochu} by optimizing an {\em acquisition function} derived from the model. The model is built based on all past evaluation points in an attempt to approximate the true function globally. As a result, the acquisition function is often not trivial to maintain and optimize and per iteration complexity of BO methods increases. On the other hand, 
DFO-TR and other model-based DFO methods content themselves with building a local model of the true function, hence maintenance of such models remains moderate and optimization step on each iteration is cheap. 

We compare DFO-TR with SMAC \cite{hoos}, SPEARMINT \cite{snoek}, and TPE \cite{bergstra2}, which are popular Bayesian optimization algorithms based on different types of model. We show that DFO-TR is capable of obtaining better or comparable objective function values using fewer function evaluations and a much better computational effort overall. We also show that DFO-TR is more efficient than random search, finding better objective function values faster.  We also discuss the convergence properties of DFO-TR and its stochastic variant \cite{matt}, and argue that these results apply to optimizing {\em expected } AUC value, when it is a smooth function. We suggest further improvements to the algorithm by applying stochastic function evaluations and thus reducing function evaluation complexity and demonstrate computational advantage of this approach.

In summary our contributions are as follows
\begin{itemize}
\item We provide a computational comparison that shows that model-based trust-region DFO methods can be superior to BO methods and random search on a variety of black-box problems over continuous variables arising in ML. 
\item We utilize recently developed theory of stochastic model-based trust region methods to provide theoretical foundations for applying the method to optimize AUC function over a set of linear classifiers. We show that this function is continuous in expectations under certain assumptions on the data set.
\item We provide a simple direct way to optimize AUC function, which is often more desirable than optimizing accuracy or classical loss functions. 
\end{itemize}

 This paper is organized in six sections. In the next section, we describe the practical framework of a {DFO-TR} algorithm. In \S\ref{first}, we describe how AUC function can be interpreted as a smooth function in expectation. In \S\ref{sec4}, we state the main similarities and differences between Bayesian optimization and DFO-TR. We present computational results in \S\ref{third}. Finally, we state our conclusions in \S\ref{fourth}.

%*********
% Section
%*********
\section{Algorithmic Framework of DFO-TR} \label{second}
Model-based trust region DFO methods \cite{ARConn_KScheinberg_PhLToint_1997b, MJDPowell_2004} have been proposed for a class of optimization problems of the form $\min_{w\in \mathbb{R}^d} f(w)$,  when computing the gradient and the Hessian of $f(w)$ is not possible, either because it is unknown or because the computable gradient is too noisy to be of use. It is, however, assumed that some local first-order or even second-order information of the objective function is possible to construct to an accuracy sufficient for optimization. If the function is smooth, then such information is usually constructed by building an interpolation or regression model of $f(w)$ using a set of points for which function value is (approximately known) \cite{conn}. By using quadratic models, these methods are capable of approximating the second-order information efficiently to speed up convergence and to guarantee convergence to local minima, rather than simply local stationary points. They have been shown to be the most practical black-box optimization methods in deterministic settings \cite{JJMore}. 
Extensive convergence analysis of these methods over smooth deterministic functions have been summarized in \cite{conn}.
%Algorithm \ref{al1} represents the algorithmic framework of {DFO-TR}, as a practical variant of the DFO methods, stated in \cite{conn}. While, the convergence analysis in \cite{conn} applies to smooth deterministic (not noisy) functions, Algorithm \ref{al1} does not have any convergence guarantees, when it applies to the following deterministic optimization problem
%\bequationn \label{deterministic}
%\min_{x} f(x).
%\eequationn
%However, it performs efficiently in practice \cite{ASBandeira}. On the other hand, r

Recently, several variants of trust region methods have been proposed to solve stochastic optimization problem $\min_w \mathbb{E_\xi}[f(w,\xi)]$, 
%\begin{equation*} \label{stochastic}
%\min_w \mathbb{E_\xi}[f(w,\xi)],
%\end{equation*}
where $f(w,\xi)$ is a stochastic function of a deterministic  vector $w\in \mathbb{R}^d$ and a random variable $\xi$ \cite{2015sarhasetal,Larson2015,matt}. In particular, in \cite{matt}, a trust region based stochastic method, referred to STORM (STochastic Optimization with Random Models), is introduced and shown to converge, almost surely, to a stationary point of $\mathbb{E_\xi}[f(w,\xi)]$, under the assumption that $\mathbb{E_\xi}[f(w,\xi)]$ is smooth. Moreover, in recent work \cite{BlanchetCartisMenickellyScheinberg2016} a convergence rate of this method has been analyzed. 
%However, if the constructed model is a fully quadratic model, then this algorithm converges to a local minimum not only an stationary point \cite{matt}.
This class of stochastic methods utilizes  samples of $f(w,\xi)$ to construct models that approximate  $\mathbb{E_\xi}[f(w,\xi)]$ sufficiently accurately, with high enough probability. 
In the next section, we will show that the AUC function can be a smooth function in expectation, under some conditions, hence STORM method and tis convergence properties are applicable.
For general convergent framework of STORM, we refer the reader to \cite{matt}. 

Here in Algorithm \ref{al1}, we present the specific practical implementation of a deterministic algorithm, which can work with finite training sets rather than infinite distributions, but shares many properties with STORM and produces very good results in practice. The key difference between STORM and DFO-TR is that the former requires resampling $f(w,\xi)$ for various $w$'s, at each iteration, since $f(w,\xi)$ is a random value for any fixed $w$, while DFO-TR computes only one value of deterministic $f(w)$ per iteration. When applied to deterministic smooth functions, this algorithm converges to a local solution \cite{conn},
but here we apply it to a nonsooth function which can be viewed as a noisy version of a smooth function (as argued in the next section). While there are no convergence results for DFO-TR or STORM for deterministic nonsooth, {\em noisy} functions, the existing results indicate that the DFO-TR method will converge to a neighborhood of the solution before the noise in the function estimates prevents further progress. Our computational results conform this. 

We note a few key properties on the algorithm. At each iteration, a quadratic model, not necessarily convex, is constructed using previously evaluated points that are sufficiently close to the current iterate. Then, this model is optimized  inside the trust region  $\mathcal{B}(w_k, \Delta_k) := \{w: \|w-w_k\| \leq \Delta_k\}$. The global solution for the trust region subproblem is well known and can be obtained efficiently in $O(d^3)$ operations \cite{TRBook},
which is not expensive, since in our setting $d$ is small. The number of points that are used to construct the model is at most $ \frac{(d+1)(d+2)}{2}$, but good models that exploit some second-order information can be constructed with $O(d)$ points. Each iteration requires only one new evaluation of the function and the new function value  either provides an improvement over the best observed value or can be used to improve the local model \cite{ScheToin09}. Thus the method utilizes function evaluations very efficiently. 

\begin{algorithm}[ht]
\caption{~\textbf{Trust Region based Derivative Free Optimization (DFO-TR)}}
\label{al1}
\begin{algorithmic}
 \STATE \textbf{1:} \textbf{Initializations:} 
\STATE \text{2:} Initialize $w_0$, $\Delta_0>0$, and choose $0<\eta_0<\eta_1<1$, \\
~~~~$\theta>1$, and $0< \gamma_1<1<\gamma_2$.
\STATE \text{3:} Define an interpolation set $\mathcal{W}\in \mathcal{B}(w_0,\Delta_0)$.  
\STATE \text{4:} Compute $f(w)$ for all $w\in \mathcal{W}$, let $m=|\mathcal{W}|$. %and store them in set $\mathcal{F}_{\mathcal X}$.
\STATE \text{5:} Let $w_0:=\bar{w_0}=\arg \min_{w \in \mathcal{W}} f(w)$. 
\STATE \textbf{6:} \textbf{for $k=1,2,\cdots$ do}
\STATE \text{7:}~~~~\textbf{Build the model:} 
\STATE \text{8:}~~~~Discard all $w\in \mathcal W$ such that $\|w-w_k\|\geq\theta \Delta_k$.
\STATE \text{9:}~~~~Using $\mathcal{W}$ construct an interpolation model: \\
~~~~~~~$Q_k(w)=f_k+g_k^T(w-w_k)$\\
~~~~~~~~~~~~~~~~~~~~~~~$+\frac{1}{2}(w-w_k)^TH_k(w-w_k)$.  
 \STATE \text{10:}~~~~\textbf{Minimize the model within the trust region:} 
 \STATE \text{11:}~~~~$\hat{w}_k= \arg \min_{w \in \mathcal{B}(w_k,\Delta_k)}Q_k(w)$.
  \STATE \text{12:}~~~~Compute  $f(\hat{w}_k)$ and  $\rho_k := \frac{f(w_k)-f(\hat{w}_k)}{ Q_k(w_k)-Q_k(\hat{w}_k)}$. 
\STATE \text{13:}~~~~\textbf{Update the interpolation set:} 
 \STATE \text{14:}~~~~\textbf{if} $m < \frac{1}{2}(d+1)(d+2)$ \textbf{then} 
 \STATE \text{15:}~~~~~~add new point $\hat{w}_k$ to the interpolation set $\mathcal W$, \\
 ~~~~~~~~~~~and $m:=m+1$.
 \STATE \text{16:}~~~~\textbf{else}
 \STATE \text{17:}~~~~~~\text{if} $\rho_k \geq \eta_0$ then replace $\arg\max_{w\in \mathcal{W}} \|w-w_k\|$ \\
 ~~~~~~~~~~ with $\hat{w}_k$,
 \STATE \text{18:}~~~~~~\text{otherwise} do the same if \\
 ~~~~~~~~~~ $\|w-w_k\|<\max _{w\in \mathcal{W}} \|w-w_k\|$. 
  \STATE \text{19:}~~~~\textbf{end if} 
% \STATE \text{18:}~~~~~~ Update set $\mathcal{F}_{\mathcal X}$ accordingly.
\STATE \text{20:}~~~~\textbf{Update the trust region radius:} 
\STATE \text{21:}~~~~if $\rho_k \geq \eta_1$ then $w_{k+1} \gets \hat{w}_k$ and $\Delta_{k+1}  \gets \gamma_2 \Delta_k$.
 \STATE \text{22:}~~~~if $\rho_k < \eta_0$ then $w_{k+1} \gets w_k$, and if $m > d+1$  \\
 ~~~~~~~~ update $\Delta_{k+1} \gets \gamma_1 \Delta_k$, otherwise $\Delta_{k+1} \gets \Delta_k$.
\STATE \textbf{23:} \textbf{end for.}
 \end {algorithmic}
\end{algorithm}

\section{AUC function and its expectation} \label{first}
In this section, we define the AUC function  of a linear classifier $w^Tx$ and demonstrate that under certain assumptions on the data set distribution, 
its expected value is smooth with respect to $w$. First, suppose that we have two given sets $\mathcal{S}_+:=\{x_i^+:i = 1, \ldots, N_+ \}$ and $\mathcal{S}_-:=\{x_j^-:j = 1, \ldots, N_- \}$, sampled from distributions $\mathcal{D}_1$ and  $\mathcal{D}_2$, respectively. For a given linear classifier $w^T x$, the corresponding AUC value, a (noisy) nonsmooth deterministic function, is obtained as \cite{mann}
\begin{equation} \label{AUC}
F_{AUC}(w) = \frac{\sum_{i=1}^{N_+} \sum_{j=1}^{N_-} \mathbb{I}_w (x_i^+ , x_j^-)}{N_+  N_-},
\end{equation}
where $\mathbb{I}_w$ is an indicator function defined as
 \begin{equation*} \label{step}
 \mathbb{I}_w(x_i^+,x_j^-) = \begin{cases} +1 & \text {if} ~~ w^Tx_i ^+> w^Tx_j^-,\\ 0 & \text {otherwise.} \end{cases}
\end{equation*}
%In this case, since we deal with a nonsmooth deterministic function $F_{AUC}(w)$, for the sake of performance of {DFO-TR}, one needs to guarantee that $F_{AUC}(w)$ is a smooth function in expectation \cite{matt}. Therefore, 
Clearly, $F_{AUC}(w)$ is piece-wise constant function of $w$, hence it is not continuous. 
However, we show that the expected value of AUC, denoted by $\mathbb{E}[F_{AUC}(w)]$, is a smooth function of the vector $w$, where the expectation 
is taken over $\mathcal{S}_+$ and $\mathcal{S}_-$, in some cases. 

To this end, first we need to interpret the expected value of AUC in terms of a probability value.
For two given finite sets ${\cal S_+}$ and ${\cal S_-}$, the AUC value of the classifier $w^Tx$ can be interpreted as
\begin{equation*} \label{noisy.auc}
F_{AUC}(w) = P\left(w^TX_+ > w^TX_-\right),
\end{equation*}
where $X_+$ and $X_-$ are randomly sampled from ${\cal S_+}$ and ${\cal S_-}$, respectively. Now, if two sets $\mathcal{S}_+$ and $\mathcal{S}_-$ are randomly drawn from distributions $\mathcal{D}_1$ and $\mathcal{D}_2$, then the expected value of AUC is defined as 
\begin{equation} \label{exp.auc}
\mathbb{E}\left[F_{AUC}(w)\right] = P\left(w^T\hat{X}_+ > w^T\hat{X}_-\right),
\end{equation}
where $\hat{X}_+$ and $\hat{X}_-$ are randomly chosen from distributions $\mathcal{D}_1$ and $\mathcal{D}_2$, respectively.
%\eremark
In what follows, we show that if two distributions $\mathcal{D}_1$ and  $\mathcal{D}_2$ are jointly normal, then $\mathbb{E}[F_{AUC}(w)]$ is a smooth function of $w$. We use the following results from statistic. 

\begin{theorem} \label{T1}
If  two $d-$dimensional random vectors $\hat{X}_1$ and $\hat{X}_2$ have a joint multivariate normal distribution, such that
\begin{equation} \label{p1}
\begin{pmatrix} \hat{X}_1 \\ \hat{X}_2 \end{pmatrix}
\sim \mathcal{N}\left( \mu, \Sigma \right),
\end{equation}
\begin{equation*}
 \text{where} ~~ \mu = \begin{pmatrix} \mu_1 \\ \mu_2 \end{pmatrix}~~\text{and}~~~ \Sigma = 
\begin{pmatrix} \Sigma_{11} & \Sigma_{12} \\ \Sigma_{21} & \Sigma_{22} \end{pmatrix}.
\end{equation*}
Then, the marginal distributions of $\hat{X}_1$ and $\hat{X}_2$ are normal distributions with the following properties
\begin{equation*}
\hat{X}_1 \sim \mathcal{N}\left(\mu_1, \Sigma_{11}\right), ~~~~~~~~~~\hat{X}_2 \sim \mathcal{N}\left(\mu_2, \Sigma_{22}\right).
\end{equation*}
\end{theorem}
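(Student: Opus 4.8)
The plan is to argue via characteristic functions, for which the statement is essentially immediate. Recall that the joint characteristic function of a vector distributed as $\mathcal{N}(\mu,\Sigma)$ in \eqref{p1} is
\[
\varphi_{\hat X_1,\hat X_2}(t_1,t_2)=\exp\!\left(i\,(t_1^T\mu_1+t_2^T\mu_2)-\tfrac12\,(t_1^T,t_2^T)\,\Sigma\,(t_1^T,t_2^T)^T\right),\qquad t_1,t_2\in\mathbb{R}^d.
\]
The characteristic function of the marginal $\hat X_1$ is recovered by setting $t_2=0$, which kills the $\mu_2$ term and collapses the quadratic form $(t_1^T,0)\,\Sigma\,(t_1^T,0)^T$ to $t_1^T\Sigma_{11}t_1$, giving $\varphi_{\hat X_1}(t_1)=\exp(i\,t_1^T\mu_1-\tfrac12 t_1^T\Sigma_{11}t_1)$. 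This is exactly the characteristic function of $\mathcal{N}(\mu_1,\Sigma_{11})$, so by the uniqueness theorem for characteristic functions $\hat X_1\sim\mathcal{N}(\mu_1,\Sigma_{11})$; the claim for $\hat X_2$ follows by the symmetric argument with $t_1=0$.

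An alternative, if one prefers to take the affine-representation definition of the multivariate normal as the starting point, is the following. Write $(\hat X_1,\hat X_2)^T=\mu+AZ$ with $Z\sim\mathcal{N}(0,I)$ and $AA^T=\Sigma$, and partition $A=(A_1^T,A_2^T)^T$ conformally with the blocks. Then $\hat X_1=\mu_1+A_1Z$ is itself an affine image of a standard Gaussian, hence multivariate normal, with mean $\mu_1$ and covariance $A_1A_1^T$, which is precisely the $(1,1)$ block $\Sigma_{11}$ of $\Sigma=AA^T$. Again the $\hat X_2$ statement is symmetric.

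A third, more pedestrian route is to integrate the joint density of \eqref{p1} over $\hat X_2$ directly. This is the computation-heavy option: it forces one to invert the block matrix $\Sigma$, split the exponent using the Schur complement, and complete the square in the $\hat X_2$ variable so that the Gaussian integral in those coordinates evaluates to a constant, leaving the $\mathcal{N}(\mu_1,\Sigma_{11})$ density. The block inversion and completion of the square are the only real obstacle anywhere in the proof, and they are exactly what the characteristic-function argument sidesteps — which is why I would present that argument and relegate the density computation to a remark or omit it. One should note the mild regularity caveat that these manipulations implicitly assume $\Sigma$ (or at least $\Sigma_{22}$) is nonsingular; the characteristic-function proof has no such restriction and covers the degenerate case as well, which is the cleanest reason to prefer it here.
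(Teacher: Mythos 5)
Your argument is correct and complete. Note that the paper does not actually prove this statement at all --- it simply cites Tong's monograph on the multivariate normal distribution --- so any self-contained proof you give is already more than the paper provides. Of your three routes, the characteristic-function argument is the standard textbook one and is exactly right: setting $t_2=0$ in the joint characteristic function immediately yields the characteristic function of $\mathcal{N}(\mu_1,\Sigma_{11})$, and uniqueness of characteristic functions finishes the job. Your observation that this route (and the affine-representation route) handles singular $\Sigma$, whereas the density-integration route requires $\Sigma_{22}$ to be invertible, is a genuinely worthwhile point --- in the paper's application the covariance structure of the positive- and negative-class features is not assumed nondegenerate, so the restriction-free argument is the appropriate one to present. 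Either of your first two arguments would serve as a clean replacement for the bare citation.
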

\begin{proof}
The proof can be found in \cite{tong}.  $\square$
\end{proof}

%Using the result of Theorem \ref{T1}, the following result is  derived. 
\begin{theorem} \label{T2}
Consider two random vectors $\hat{X}_1$ and $\hat{X}_2$, as defined in \eqref{p1}, then for any vector $w \in \mathbb{R}^d$, we have
\begin{equation} \label{z}
Z = w^T \left(\hat{X}_1-\hat{X}_2\right) \sim \mathcal{N}\left(\mu_Z, \sigma_Z^2\right),
\end{equation}
\begin{equation} \label{zp}
\begin{aligned}
\text{where}~~ \mu_Z &= w^T\left(\mu_1 - \mu_2\right)\\
\text{and}~~\sigma_Z^2 &= w^T \left( \Sigma_{11} + \Sigma_{22} - \Sigma_{12} - \Sigma_{21} \right)w.
\end{aligned}
\end{equation}
\end{theorem}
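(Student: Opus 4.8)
The plan is to recognize $Z$ as a single scalar linear functional of the stacked Gaussian vector and then invoke the standard closure of the multivariate normal family under affine maps. First I would write $\hat{X}_1-\hat{X}_2 = \begin{bmatrix} I_d & -I_d \end{bmatrix}\begin{pmatrix}\hat{X}_1\\ \hat{X}_2\end{pmatrix}$, so that
$Z = w^T(\hat{X}_1-\hat{X}_2) = a^T Y$, where $a = \begin{pmatrix} w \\ -w \end{pmatrix}\in\mathbb{R}^{2d}$ and $Y = \begin{pmatrix}\hat{X}_1\\ \hat{X}_2\end{pmatrix}\sim\mathcal{N}(\mu,\Sigma)$ by \eqref{p1}. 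Thus $Z$ is an affine (in fact linear) image of a multivariate normal vector.

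Next I would apply the fact that for $Y\sim\mathcal{N}(\mu,\Sigma)$ and any fixed $a\in\mathbb{R}^{2d}$ the scalar $a^T Y$ is normally distributed with mean $a^T\mu$ and variance $a^T\Sigma a$. This can be cited from the same reference as Theorem \ref{T1}, or proved in one line from the moment generating function: $\mathbb{E}\!\left[e^{t\,a^T Y}\right] = \exp\!\left(t\,a^T\mu + \frac{1}{2}t^2\, a^T\Sigma a\right)$ for all $t\in\mathbb{R}$, which is exactly the moment generating function of $\mathcal{N}(a^T\mu, a^T\Sigma a)$; uniqueness of the MGF then yields $Z\sim\mathcal{N}(a^T\mu, a^T\Sigma a)$.

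It then remains to evaluate the two block expressions. For the mean, $a^T\mu = w^T\mu_1 + (-w)^T\mu_2 = w^T(\mu_1-\mu_2)$, which is exactly $\mu_Z$ in \eqref{zp}. For the variance, multiplying out the $2\times 2$ block product gives $a^T\Sigma a = w^T\Sigma_{11}w - w^T\Sigma_{12}w - w^T\Sigma_{21}w + w^T\Sigma_{22}w = w^T\!\left(\Sigma_{11}+\Sigma_{22}-\Sigma_{12}-\Sigma_{21}\right)w$, matching $\sigma_Z^2$. Combining these with the previous paragraph establishes \eqref{z}.

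There is essentially no hard step here; the only points requiring care are the sign bookkeeping on the cross terms $\Sigma_{12},\Sigma_{21}$ in the block expansion, and observing that we need no extra hypotheses on $\Sigma$ beyond those already carried by the jointly-normal assumption. If a fully self-contained argument is preferred over citation, I would include the one-line MGF computation above in place of appealing to the closure property.
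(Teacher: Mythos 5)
Your proposal is correct. Note, however, that the paper does not actually prove Theorem~\ref{T2} at all: its ``proof'' consists of a single citation to the reference \cite{tong}, exactly as for Theorem~\ref{T1}. So your argument is not so much a different route as the only actual derivation on the table. The route you chose is the standard one and is sound: writing $Z = a^T Y$ with $a = (w^T, -w^T)^T$ and $Y$ the stacked vector from \eqref{p1}, invoking closure of the multivariate normal family under linear maps (or, self-containedly, the moment generating function identity $\mathbb{E}[e^{t a^T Y}] = \exp(t\, a^T\mu + \tfrac12 t^2 a^T\Sigma a)$ together with uniqueness of the MGF), and then expanding the block products. Your sign bookkeeping on the cross terms $\Sigma_{12}$ and $\Sigma_{21}$ is correct, and the resulting mean and variance match \eqref{zp}. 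The only pedantic caveat, which applies equally to the theorem statement itself, is the degenerate case $a^T\Sigma a = 0$ (e.g.\ $w=0$ or $\hat X_1 - \hat X_2$ constant in the direction $w$), where $Z$ is a point mass and $\mathcal{N}(\mu_Z, 0)$ must be read as a degenerate normal; this does not affect the substance of your argument.
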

\begin{proof}
The proof can be found in \cite{tong}.  $\square$
\end{proof} 

Now, in what follows, we have the formula for the expected value of the AUC.
\begin{theorem} \label{T3}
If two random vectors $\hat{X}_1$ and $\hat{X}_2$, respectively drawn from distributions $\mathcal{D}_1$ and $\mathcal{D}_2$, have a joint normal distribution as defined in Theorem \ref{T1}, then the expected value of AUC function can be defined as 
\begin{equation*}
\mathbb{E}\left[F_{AUC}(w)\right] = \phi\left(\frac{\mu_Z}{\sigma_Z}\right),
\end{equation*}
where $\phi$ is the cumulative function of the standard normal distribution, so that $\phi(x) = e^{-\frac{1}{2} x^2} / 2 \pi$, for $\forall x \in \mathbb{R}$.
\end{theorem}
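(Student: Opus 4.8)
The plan is to reduce the expected AUC to a univariate Gaussian tail probability and then read the result off the standard normal CDF. Starting from the probabilistic interpretation in \eqref{exp.auc}, I would identify $\hat{X}_+$ with $\hat{X}_1$ and $\hat{X}_-$ with $\hat{X}_2$, so that $\mathbb{E}[F_{AUC}(w)] = P(w^T\hat{X}_1 > w^T\hat{X}_2)$. Rewriting the event as $\{w^T(\hat{X}_1 - \hat{X}_2) > 0\}$, i.e.\ $\{Z > 0\}$ for the scalar $Z$ defined in \eqref{z}, turns the task into computing $P(Z > 0)$.

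The next step is to apply Theorem \ref{T2}, which gives $Z \sim \mathcal{N}(\mu_Z, \sigma_Z^2)$ with $\mu_Z$ and $\sigma_Z^2$ as in \eqref{zp}. Assuming $\sigma_Z > 0$, standardizing yields $P(Z > 0) = P((Z - \mu_Z)/\sigma_Z > -\mu_Z/\sigma_Z) = 1 - \phi(-\mu_Z/\sigma_Z)$, and the symmetry of the standard normal, $1 - \phi(-t) = \phi(t)$, gives $\mathbb{E}[F_{AUC}(w)] = \phi(\mu_Z/\sigma_Z)$. Since $Z$ is a non-degenerate normal, $P(Z = 0) = 0$, so the strictness of the inequality in the indicator $\mathbb{I}_w$ is irrelevant.

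For the smoothness claim that motivates the whole section, I would note that $w \mapsto \mu_Z = w^T(\mu_1 - \mu_2)$ is linear and $w \mapsto \sigma_Z = (w^T(\Sigma_{11} + \Sigma_{22} - \Sigma_{12} - \Sigma_{21})w)^{1/2}$ is $C^\infty$ on $\{w \neq 0\}$ provided the combined covariance matrix is positive definite, while $\phi$ is $C^\infty$; hence $\phi(\mu_Z/\sigma_Z)$ is smooth in $w$ away from the origin. The only delicate point — and essentially the sole thing to be careful about in an otherwise mechanical argument — is the degenerate case $\sigma_Z = 0$, in which $Z$ is almost surely the constant $\mu_Z$ and the formula collapses to a $0/1$ step; the positive-definiteness assumption on $\Sigma_{11} + \Sigma_{22} - \Sigma_{12} - \Sigma_{21}$ is precisely what rules this out and makes the clean smooth formula valid.
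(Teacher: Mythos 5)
Your proof follows the same route as the paper's: identify $\mathbb{E}[F_{AUC}(w)]$ with $P(Z>0)$ via \eqref{exp.auc}, invoke Theorem \ref{T2} for the normality of $Z$, standardize, and apply the symmetry $1-\phi(-t)=\phi(t)$. Your additional remarks on the degenerate case $\sigma_Z=0$ and on smoothness away from the origin are correct refinements that the paper's proof leaves implicit, but the core argument is identical.
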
 
\begin{proof}
From Theorem \ref{T2} we have
\begin{equation*}
\begin{aligned}
& \mathbb{E}\left[F_{AUC}(w)\right] \\
& = P\left(w^T\hat{X}_+ > w^T\hat{X}_-\right) =  P\left(w^T(\hat{X}_+ - \hat{X}_-)>0\right)\\
& = P(Z > 0) = 1 - P\left(Z \leq 0\right) \\
&= 1 - P\left(\frac{Z - \mu_Z}{\sigma_Z} \leq \frac{ - \mu_Z}{\sigma_Z}\right)\\
& = 1 - \phi\left(\frac{ - \mu_Z}{\sigma_Z}\right)= \phi\left(\frac{\mu_Z}{\sigma_Z}\right),
\end{aligned}
\end{equation*}
where random variable $Z$ has been defined in \eqref{z}, with the stated mean and variance in \eqref{zp}. $\square$

\end{proof}

In Theorem \ref{T3}, since the cumulative function of the standard normal distribution, i.e., $\phi$, is a smooth function, we can conclude that for a given linear classifier $w^Tx$, the corresponding expected value of AUC, is a smooth function of $w$. Moreover, it is possible to compute derivatives of this function, if the first and second moments of the normal distribution are known. We believe that the assumption of the positive and negative data classes obeying jointly normal distribution is too strong to be satisfied for most of the practical problems,  hence we do not think that the gradient estimates of $\phi\left({\mu_Z}/{\sigma_Z}\right)$ will provide good estimates of the true expected AUC. However, we believe that  smoothness of the expected  AUC remains true in cases of 
many practical distributions.  The extension of this observation can be considered as a subject of a future study. 
 
%*********
% Section
%*********
\section{Bayesian Optimization versus DFO-TR}\label{sec4}
Bayesian optimization framework, as outlined in Algorithm~\ref{Bayes}, like DFO-TR framework
operates by constructing a (probabilistic) model $M(w)$ of the true function $f$ by using function values computed thus far by the algorithm.
The next iterate $w_k$ is computed by optimizing an {\em acquisition} function,  $a_M$, which presents a trade-off between minimizing the model and improving the model, by exploring areas where $f(w)$ has not been sampled.  Different Bayesian optimization algorithms use different models and different acquisition functions, for instance, {\em expected improvement} \cite{schonlau} over the best observed function value is a popular acquisition function in the literature. 

The key advantage and difficulty of BO methods is that the acquisition function may have a complex structure, and needs to be optimized globally on each iteration.  For example, the algorithm in \cite{brochu} uses  deterministic derivative free optimizer DIRECT \cite{DRJones} to maximize the acquisition function. When evaluation of $f(w)$ is very expensive, then the expense of optimizing the acquisition function may be small in comparison. However, in many case, as we will see in our computational experiments, this expense can be dominant. 
%based on using different model classes. The main strategy of Bayesian optimization is to use all of the information available from previous evaluations of $f$, in order to learn a global model. This procedure requires more computation to determine next candidate point to evaluate. However, it improves the efficiency of Bayesian optimization algorithms in terms of required number of function evaluations. Specially, when evaluation of $f$ is expensive, performing more computation to make a better decision could be an efficient strategy. On the contrary, 
In contrast, the DFO-TR method, as described in Algorithm \ref{al1}, maintains a quadratic model by using only the points in the neighborhood of 
the current iterate and global optimization of this model subject to the trust region constraint can be done efficiently, as was explained in the previous section. While $Q(w)$ is a local model, it can capture nonconvexity of the true function $f(w)$ and hence allows the algorithm to follow negative curvature directions. As we will see in \S\ref{BO}, for the same amount of number of function evaluations, DFO-TR achieved better or comparable function values, while requiring significantly less computational time than Bayesian optimization algorithms (TPE, SMAC, and SPEARMINT).

\begin{algorithm}[ht]
\caption{~\textbf{Bayesian Optimization}}
\label{Bayes}
\begin{algorithmic}
\STATE {1:} \textbf{for $t = 1,2, \cdots$ do} 
\STATE {2:} ~~~~Find $w_k$ by optimizing the acquisition function over \\
~~~~~~~model $M$: $w_k \gets \arg \min _{w} a_M(w|\mathcal{D}_{1:k-1}).$
 \STATE  {3:} ~~~Sample the objective function: $v_k := f(w_k)$.
 \STATE  {4:} ~~~Augment the data $\mathcal{D}_{1:k} = \{\mathcal{D}_{1:k-1}, (w_k,v_k)  \}$\\
 ~~~~~~~and update the model $M$.
 \STATE  {5:} \textbf{end for}
 \end {algorithmic}
\end{algorithm}

\section{Numerical Experiments}\label{third}
\subsection{Optimizing Smooth, NonConvex Benchmark Functions} \label{benchmarks}
In this section, we compare the performance of DFO-TR and Bayesian optimization algorithms on optimizing three nonconvex smooth benchmark functions. We compare the precision $\Delta f_{opt}$ with the global optimal  value, which is known,  and is computed after a given number of function evaluations. 

%This comparison is in terms of the number of function evaluations to reach the global optimal function value. In what follows, $\Delta f_{opt}$ is the precision to reach, that is the difference to the global optimal function value, which is computed after a given number of function evaluations. 

Algorithm \ref{al1} is implemented in Python 2.7.11 \footnote{\url{https://github.com/TheClimateCorporation/dfo-algorithm}} . We start from the zero vector as the initial point. In addition, the trust region radius is initialized as $\Delta_0 = 1$ and the initial interpolation set has $d +1$ random members. The parameters are chosen as $\eta_0 = 0.001$, $\eta_1 = 0.75$, $\theta=10$, $\gamma_1 = 0.98$, and $\gamma_2 = 1.5$. 
We have used the hyperparameter optimization library, HPOlib \footnote{\url{www.automl.org/hpolib}}, to perform the experiments on TPE, SMAC, and SPEARMINT algorithms, implemented in Python, Java3, and MATLAB, respectively. Each benchmark function is evaluated on its known search space, as is defined in the default setting of the HPOlib (note that DFO-TR does not require nor utilizes a restricted search space). 

We can see that on all three problems DFO-TR reaches the global value accurately and quickly, outperforming BO methods. This is because DFO-TR utilizes second-order information effectively, which helps following negative curvature and significantly improving convergence in the absence of noise.
Among the three Bayesian optimization algorithms SPEARMINT performs better while the performance of TPE and SMAC is comparable to each other, but inferior to those of SPEARMINT and DFO-TR.

%\textcolor{red} {HIVA!!: EXPLAIN DFO DOES NOT NEED INITIAL SEARCH SPACE WHILE HPOLIB USES THE BEST KNOWN SEARCH SPACE IN THE FIRST PLACE}

% Branin
\begin{table}[ht]     
 \caption{DFO-TR vs. BO on \textit{Branin} function in terms of $\Delta f_{opt}$, over number of function evaluations. \textit{Branin} is a two dimensional function with $f_{opt} = 0.397887$.}
  \label{t2}
 % \vskip 0.15in
\begin{center}
\begin{small}
%\begin{sc}
  \begin{tabular}{ ccccr}  
  \hline
  %\abovespace\belowspace
{Algorithm}&{1}&{5}&\textbf{11}&{100}\\ 
 \hline
% \abovespace
DFO-TR&15.7057&0.1787&0&0\\  
TPE&30.0880&4.8059&3.4743&0.0180\\ 
SMAC&23.7320&10.3842&6.7017&0.0208\\ 
%\belowspace
SPEARMINT&34.3388&17.1104&1.1615&3.88e-08\\ 
\hline
\end{tabular}
%\end{sc}
\end{small}
\end{center}
\vskip -0.2in
\end{table}

% Camelback
\begin{table}[ht]     
\tiny
 \captionsetup{justification=centering}
 \caption{DFO-TR vs. BO on \textit{Camelback} function in terms of $\Delta f_{opt}$, over number of function evaluations. \textit{Camelback} is a two dimensional function with $f_{opt} = -1.031628$.}
  \label{t2}
  %\vskip 0.15in
\begin{center}
\begin{small}
%\begin{sc}
  \begin{tabular}{ ccccr}  
  \hline
 % \abovespace\belowspace
{Algorithm}&{1}&{10}&\textbf{21}&{100}\\  
\hline
 %\abovespace
DFO-TR&2.3631&0.1515&0&0\\  
TPE&3.3045&0.5226&0.3226&0.0431\\ 
SMAC&1.0316&0.0179&0.0179&0.0036\\ 
%\belowspace
SPEARMINT&2.3868&1.6356&0.1776&2.29e-05\\ 
\hline
\end{tabular}
%\end{sc}
\end{small}
\end{center}
\vskip -0.2in
\end{table}

% Hartmann
\begin{table}[ht]     
\tiny
 \caption{DFO-TR vs. BO on \textit{Hartmann} function in terms of $\Delta f_{opt}$, over number of function evaluations. \textit{Hartmann} is a six dimensional function with $f_{opt} = -3.322368$.}
  \label{t2}
% \vskip 0.15in
\begin{center}
\begin{small}
%\begin{sc}
  \begin{tabular}{ ccccr}  
  \hline
 % \abovespace\belowspace
{Algorithm}&{1}&{25}&\textbf{64}&{250}\\  
\hline
% \abovespace
DFO-TR&3.1175&0.4581&0&0\\  
TPE&3.1862&2.5544&1.4078&0.4656\\ 
SMAC&2.8170&1.5311&0.6150&0.2357\\ 
% \belowspace
SPEARMINT&2.6832&2.6671&2.5177&9.79e-05\\ 
\hline
\end{tabular}
%\end{sc}
\end{small}
\end{center}
\vskip -0.2in
\end{table}

\subsection{Optimizing AUC Function} \label{BO}
In this section, we compare the  performance of {DFO-TR} and  the three Bayesian optimization algorithms, TPE, SMAC, and SPEARMINT, on the task of 
optimizing AUC of a linear classifier, defined by $w$. While in \S\ref{first}, we have argued that $\mathbb{E}[F_{AUC}(w)]$ is a smooth function, 
in practice we have a finite data set, hence we compute the noisy nonsmooth estimate of $\mathbb{E}[F_{AUC}(w)]$. This, essentially means, that
we can only expect to optimize the objective up to some accuracy, after which the noise will prevent further progress. 
%Therefore, as the first approach to handle  this difficulty, we use the noisy nonsmooth function $F_{AUC}(w)$--stated in \eqref{AUC}--for the two given finite sets ${\cal S_+}$ and ${\cal S_-}$, as an approximation of possibly smooth function $\mathbb{E}[F_{AUC}(w)]$. In particular, $F_{AUC}(w)$ plays the role of the black-box function, so that Algorithm \ref{al1} is performed to find a local minimum of $\mathbb{E}[F_{AUC}(w)]$ over the space of all $w$ vectors. \textcolor{red}{KATYA!!!!!!}

In our experiments, we used 12 binary class data sets, as shown in Table \ref{t1}, where  some of the data sets are normalized  so that each dimension has mean 0 and variance 1. These data sets can be downloaded from LIBSVM website \footnote{\url{https://www.csie.ntu.edu.tw/~cjlin/libsvmtools/datasets/binary.html}} and UCI machine learning repository. %\footnote{\url{http://archive.ics.uci.edu/ml/}}.
 %  \footnote{\url{https://www.csie.ntu.edu.tw/~cjlin/libsvmtools/datasets/binary.html}}
 
 % Table 5.4
\begin{table}[ht]  
\caption{data sets statistic, $d:$ dimension, $N:$ number of data points, $N_{-}/N_{+}:$ class distribution ratio,  $AC:$ attribute characteristics.}
\label{t1}
%\vskip 0.15in
\begin{center}
\begin{small}
%\begin{sc}
 \begin{tabular}{cccccr} 
\hline
%\abovespace\belowspace
\textbf{data set}&\textbf{AC}&\textbf{$d$}&\textbf{$N$}& $N_{-}/N_{+}$\\ 
\hline
%\abovespace
fourclass&$[-1,1]$&2&862&1.8078\\
magic04&$[-1,1]$, scaled&10&19020&1.8439\\
svmguide1&$[-1,1]$, scaled&4&3089&1.8365\\
diabetes&$[-1,1]$&8&768&1.8657\\
german&$[-1,1]$&24&1000&2.3333\\
svmguide3&$[-1,1]$, scaled&22&1243&3.1993\\
shuttle&$[-1,1]$, scaled&9&43500&3.6316\\
segment&$[-1,1]$&19&2310&6\\
ijcnn1&$[-1,1]$&22&35000&9.2643\\
satimage&$[27,157]$, integer&36&4435&9.6867\\
vowel&$[-6,6]$&10&528&10\\
%\belowspace
letter&$[0,15]$, integer&16&20000&26.2480\\ 
\hline
\end{tabular}
%\end{sc}
\end{small}
\end{center}
\vskip -0.2in
\end{table}

The average value of AUC and its standard deviation, using five-fold cross-validation, is reported as the performance measure. Table~\ref{t2} summarizes the results. 

The initial vector $w_0$ for DFO-TR  is set to zero and the search space of Bayesian optimization algorithms is set to interval $[-1,1]$. For each data set, a fixed total budget of number of function evaluations is given to each algorithm and the final AUC computed on the test set is compared.

\begin{table*}[t!]     
 \caption{Comparing DFO-TR vs. BO algorithms.}
  \label{t2}
   \vskip 0.05in
\begin{center}
\begin{small}
%\begin{sc}
 \begin{tabular}{ c|c|cc|cc|cc|cc}  
 \hline
  %\abovespace\belowspace
\multirow{2}{*}{\textbf{Data}} 
& \multicolumn{1}{c|}{\textbf{num.}}&\multicolumn{2}{c|}{\textbf{DFO-TR}}&\multicolumn{2}{c|}{\textbf{TPE}}&\multicolumn{2}{c|}{\textbf{SMAC}}&\multicolumn{2}{c}{\textbf{SPEARMINT}}\\   
&\textbf{fevals}&{AUC}&{time}&{AUC}&{time}&{AUC}&{time}&{AUC}&{time}\\  
\hline
%\abovespace
fourclass&100&0.835$\pm$0.019&0.31&\textbf{0.839}$\pm$0.021&12&0.839$\pm$0.021&77&0.838$\pm$0.020&5229\\ 
svmguide1&100&0.988$\pm$0.004&0.71&0.984$\pm$0.009&13&0.986$\pm$0.006&72&\textbf{0.987}$\pm$0.006&6435\\ 
diabetes&100&0.829$\pm$0.041&0.58&0.824$\pm$0.044&15&0.825$\pm$0.045&75&\textbf{0.829}$\pm$0.060&8142\\ 
shuttle&100&0.990$\pm$0.001&43.4&0.990$\pm$0.001&17&0.989$\pm$0.001&76&\textbf{0.990}$\pm$0.001&13654\\ 
vowel&100&0.975$\pm$0.027&0.68&0.965$\pm$0.029&16&0.965$\pm$0.038&77&\textbf{0.968}$\pm$0.025&9101\\ 
magic04&100&0.842$\pm$0.006&10.9&0.824$\pm$0.009&16&0.821$\pm$0.012&76&\textbf{0.839}$\pm$0.006&7947\\ 
letter&200&0.987$\pm$0.003&10.2&0.959$\pm$0.008&49&0.953$\pm$0.022&166&\textbf{0.985}$\pm$0.004&21413\\ 
segment&300&0.992$\pm$0.007&9.1&0.962$\pm$0.021&99&\textbf{0.997}$\pm$0.004&263&0.976$\pm$0.021&216217\\ 
ijcnn1&300&0.913$\pm$0.005&57.3&0.677$\pm$0.015&109&0.805$\pm$0.031&268&\textbf{0.922}$\pm$0.004&259213\\ 
svmguide3&300&0.776$\pm$0.046&13.5&0.747$\pm$0.026&114&\textbf{0.798}$\pm$0.035&307&0.7440$\pm$0.072&185337\\ 
german&300&0.795$\pm$0.024&9.9&0.771$\pm$0.022&120&0.778$\pm$0.025&310&\textbf{0.805}$\pm$0.020&242921\\ 
%\belowspace
satimage&300&0.757$\pm$0.013&14.2&0.756$\pm$0.020&164&0.750$\pm$0.011&341&\textbf{0.761}$\pm$0.028&345398\\   
\hline
\end{tabular}
%\end{sc}
\end{small}
\end{center}
\vskip -0.2in
\end{table*}

For each data set, the bold number indicates the best average AUC value found by a Bayesian optimization algorithms. We can see that DFO-TR attains comparable or better AUC value to the best one, in almost all cases. Since for each data set, all algorithms are performed for the same budget of number of function evaluations, we do not include the time spent on function evaluations in the reported time. Thus, the time reported in Table~\ref{t2} is only the optimizer time. As we can see, DFO-TR is significantly faster than Bayesian optimization algorithms, while it performs competitively in terms of the average value of AUC. Note that the problems are listed in the order of increasing dimension $d$. Even thought the MATLAB implantation of SPEARMINT probably puts it at a certain disadvantage in terms of computational time comparisons, we observe that it is clearly a slow method, whose complexity grows significantly as  $d$ increases. 

%In Figures~\ref{fig1} and \ref{fig2} illustrate the progress of each of the algorithms in terms of best AUC value vs. the number of  function evaluations. As we can see, for all the data sets, the growth rate of DFO-TR is faster or as competitive as the best Bayesian optimization algorithm. In particular, for all the data sets, DFO-TR surpasses the Bayesian optimization algorithms at the early stage and remains competitive throughout. 

Next, we compare the performance of DFO-TR versus the random search algorithm (implemented in Python 2.7.11) on maximizing AUC. Table~\ref{t3} summarizes the results, in a similar manner to Table~\ref{t2}. Moreover, in Table~\ref{t3}, we also allow  random search to use twice the budget of the function evaluations, as is done in \cite{kevin} when comparing random search to BO. The random search algorithm is competitive with DFO-TR on a few problems, when using twice the budget, however, it can be seen that as the problem dimension grows, the efficiency of random search goes down substantially.  Overall, DFO-TR consistently surpasses random search  when function evaluation budgets are equal, while not requiring very significant overhead, as the BO methods. 

% Table 5.6
\begin{table*}[t!]     
 \centering
 %\tiny
 \captionsetup{justification=centering}
 \caption{Comparing DFO-TR vs. random search algorithm.}
  \label{t3}
 \vskip 0.05in
\begin{center}
\begin{small}
%\begin{sc}
  \begin{tabular}{ c|cc|cc|cc}  
  \hline
  %\abovespace\belowspace
\multirow{2}{*}{\textbf{Data}} 
& \multicolumn{2}{c|}{\textbf{DFO-TR}}&\multicolumn{2}{c|}{\textbf{Random Search}}&\multicolumn{2}{c}{\textbf{Random Search}}\\   
&AUC&{num. fevals}&{AUC}&{num. fevals}&{AUC}&{num. fevals}\\ 
 \hline
% \abovespace
fourclass&0.835$\pm$0.019&100&0.836$\pm$0.017&100&0.839$\pm$0.021&200\\  
svmguide1&0.988$\pm$0.004&100&0.965$\pm$0.024&100&0.977$\pm$0.009&200\\   
diabetes&0.829$\pm$0.041&100&0.783$\pm$0.038&100&0.801$\pm$0.045&200\\   
shuttle&0.990$\pm$0.001&100&0.982$\pm$0.006&100&0.988$\pm$0.001&200\\   
vowel&0.975$\pm$0.027&100&0.944$\pm$0.040&100&0.961$\pm$0.031&200\\   
magic04&0.842$\pm$0.006&100&0.815$\pm$0.009&100&0.817$\pm$0.011&200\\   
letter&0.987$\pm$0.003&200&0.920$\pm$0.026&200&0.925$\pm$0.018&400\\   
segment&0.992$\pm$0.007&300&0.903$\pm$0.041&300&0.908$\pm$0.036&600\\   
ijcnn1&0.913$\pm$0.005&300&0.618$\pm$0.010&300&0.629$\pm$0.013&600\\ 
svmguide3&0.776$\pm$0.046&300&0.690$\pm$0.038&300&0.693$\pm$0.039&600\\   
german&0.795$\pm$0.024&300&0.726$\pm$0.028&300&0.739$\pm$0.021&600\\   
% \belowspace
satimage&0.757$\pm$0.013&300&0.743$\pm$0.029&300&0.750$\pm$0.020&600\\   
\hline
\end{tabular}
%\end{sc}
\end{small}
\end{center}
\vskip -0.2in
\end{table*}

We finally note that while we exclude  comparisons with other methods, that optimize AUC surrogates, from this paper, due to lack of space, we have performed such experiments and observed that the final AUC values obtained by DFO-TR and BO are competitive with other existing methods. 

\subsubsection{Stochastic versus Deterministic DFO-TR} \label{subsub}
In order to further improve efficiency of DFO-TR, we observe that STORM framework and theory \cite{matt} suggests that noisy function evaluations do not need to be accurate far away from the optimal solution. In our context, this means that AUC can be evaluated on small subsets of the training set, which gradually increase as the algorithm progresses. 
%As the second approach to approximate true $\mathbb{E}[F_{AUC}(w)]$, throughout Algorithm~\ref{al1}, we consider the sets $\mathcal{S}_+$ and $\mathcal{S}_-$ as the whole space of positive and negative samples, respectively. Then, we randomly choose i.i.d. sample points from these two sets to compute $F_{AUC}(w)$, as our black-box function. In this case, the value of $F_{AUC}(w)$ can be interpreted as the expected value of AUC, with respect to sampling from the given data sets $\mathcal{S}_+$ and $\mathcal{S}_-$. Thus, in contrast to our first approach, $F_{AUC}(w)$ is not the exact deterministic value of AUC, for the given data sets $\mathcal{S}_+$ and $\mathcal{S}_-$. Toward our second approach, we present a stochastic variant of Algorithm~\ref{al1} with the following modifications:
In particular, at each iteration, we compute AUC on a subset of data, which is sampled from positive and negative sets uniformly at random, at the rate
\begin{equation*} \label{update1}
\begin{aligned}
 \min & \{ N, \max  \{ k \times  \lfloor 50 \times ({N}/{(N_++N_-)}) \rfloor + \\
 & \lfloor 1000 \times ({N}/{(N_++N_-)}) \rfloor ,  \lfloor{0.1 \times N} \rfloor  \} \}, 
\end{aligned}
\end{equation*}
where $N_+=\mathcal{S}_+$ and $N_-=\mathcal{S}_-$, and $N=N_+$, when we sample from the positive class and $N=N_-$, when we sample from the negative one. 
For each class, at least 10 percent of the whole training data is used. 

We include an additional modification--after each unsuccessful step with $\rho_k < \eta_0$, we compute $f_{new}(w_k)$ by resampling over data points. Then, we update $f(w_k)$ such that $f(w_k) := \left(f(w_k) + f_{new}(w_k)\right)/2$. This is done, so that accidental incorrectly high AUC values are not preventing the algorithm from making progress. 
This results in a less expensive (in terms of function evaluation cost) algorithm, while, as we see in 
 Figure~\ref{fig3}, the convergence to the optimal solution is comparable.  
 
 We chose two data sets {\em shuttle} and {\em letter}  to compare the performance of the stochastic variant of the DFO-TR with the deterministic one. These sets were chosen because they contain a relatively large number of data points and hence the effect of subsampling can be observed. 
We repeated each experiment  four times using five-fold cross-validation (due to the random nature of the stochastic sampling). Hence, for each problem,
the algorithms have been applied 20 times in total, and the average AUC values are reported in Figure~\ref{fig3}. At each round, all parameters of DFO-TR and S-DFO-TR are set as described in \S\ref{benchmarks}, except $w_0$, which is a random vector evenly distributed over $[-1,1]$.

As we see in Figure~\ref{fig3}, the growth rate of AUC over iterations in S-DFO-TR is as competitive as that of DFO-TR. However, by reducing the size of the data sets,  the iteration of S-DFO-TR are significantly cheaper than that of DFO-TR, especially at the beginning. This indicates that the methods can handle large data sets. 

We finally note that we chose to optimize AUC over linear classifiers for simplicity only. Any other classifier parametrized by $w$ can be trained using a black-box optimizer in a similar way. However, the current DFO-TR method have some difficulties in convergence with problems when dimension of $w$ 
is very large. 

\begin{figure} [ht]
\begin{subfigure} 
  \centering
  \includegraphics[height=0.35\linewidth,width=0.48\linewidth]{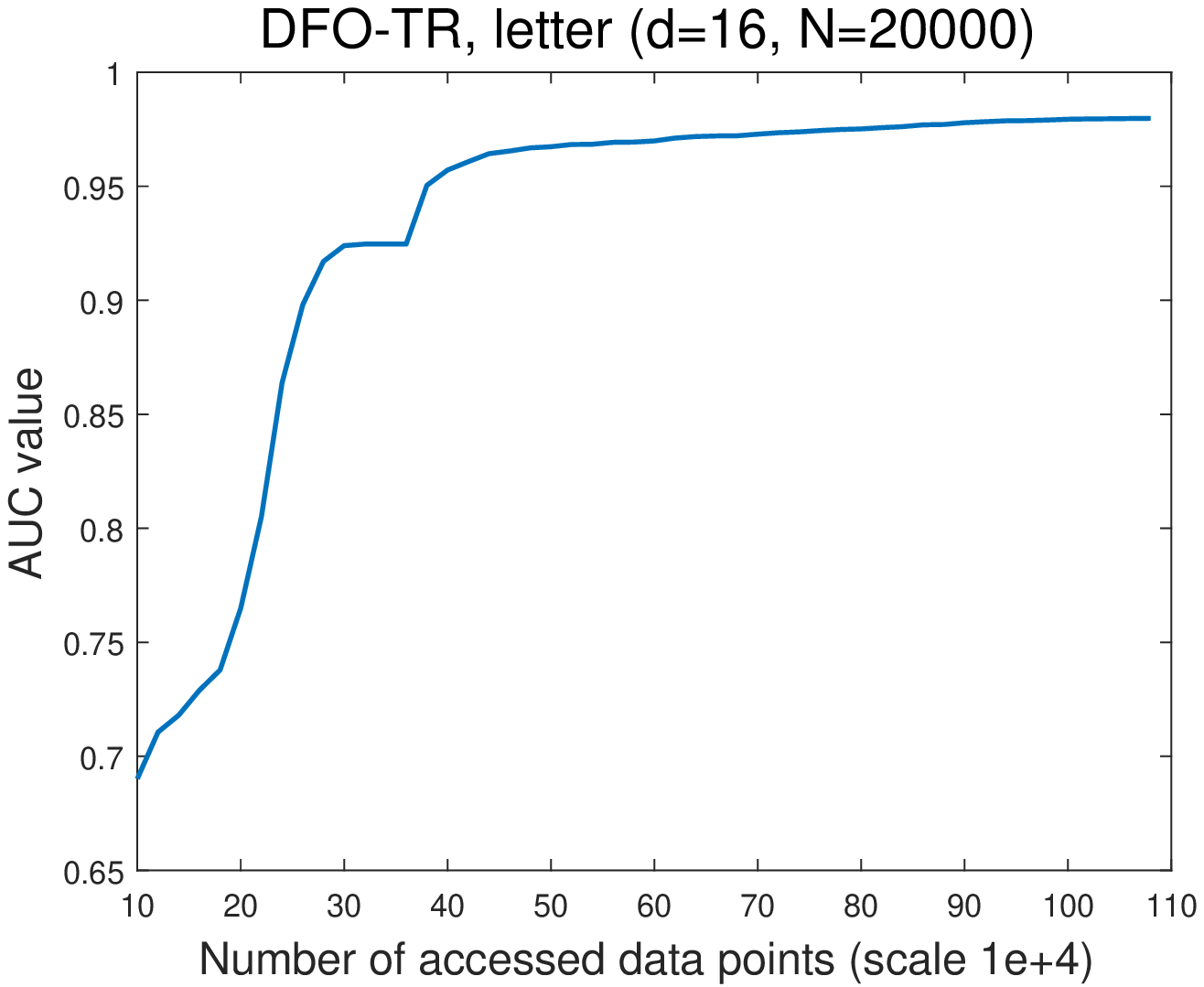}
 % \caption{DFO-TR, letter (d=16, N=20000)}
  \label{fig:sfig1}
\end{subfigure}%
\begin{subfigure}
  \centering
  \includegraphics[height=0.35\linewidth,width=0.49\linewidth]{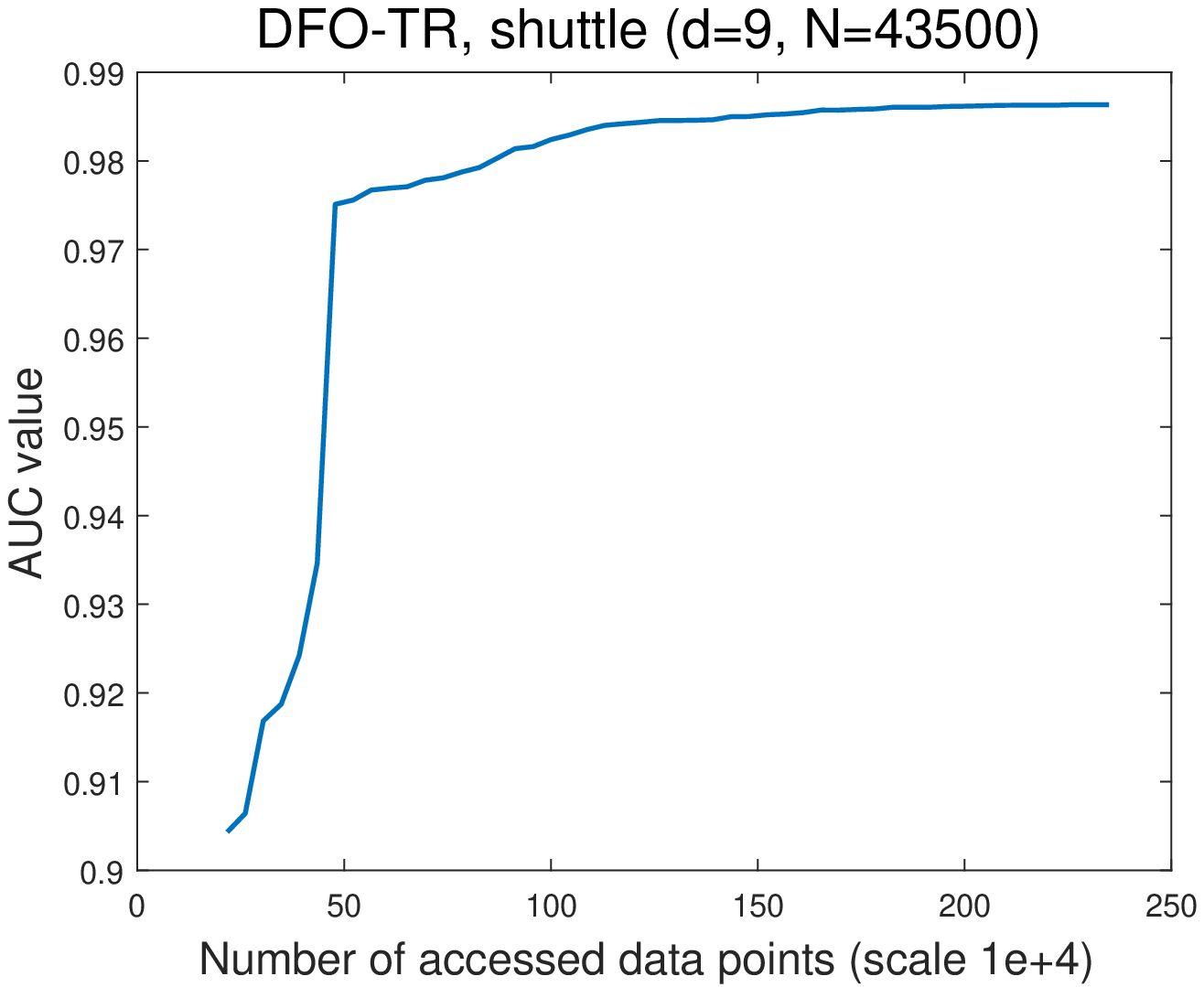}
  %\caption{S-DFO-TR, letter (d=16, N=20000)}
    \label{fig:sfig2}
\end{subfigure}
\begin{subfigure}%
  \centering
  \includegraphics[height=0.35\linewidth,width=0.48 \linewidth]{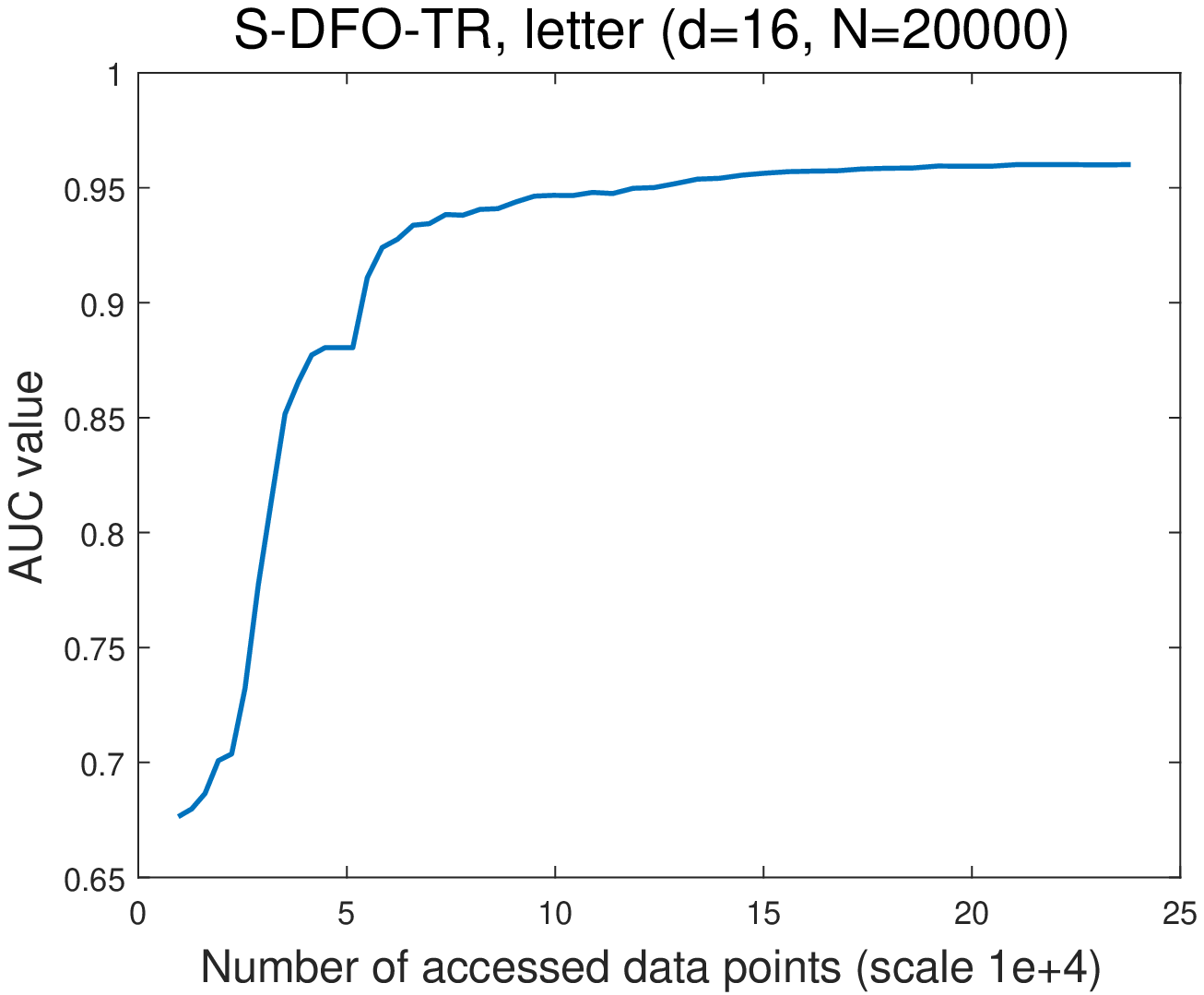}
%\caption{DFO-TR, shuttle (d=9, N=43500)}
  \label{fig:sfig2}
\end{subfigure} %
\begin{subfigure}
  \centering
  \includegraphics[height=0.35\linewidth,width=0.48\linewidth]{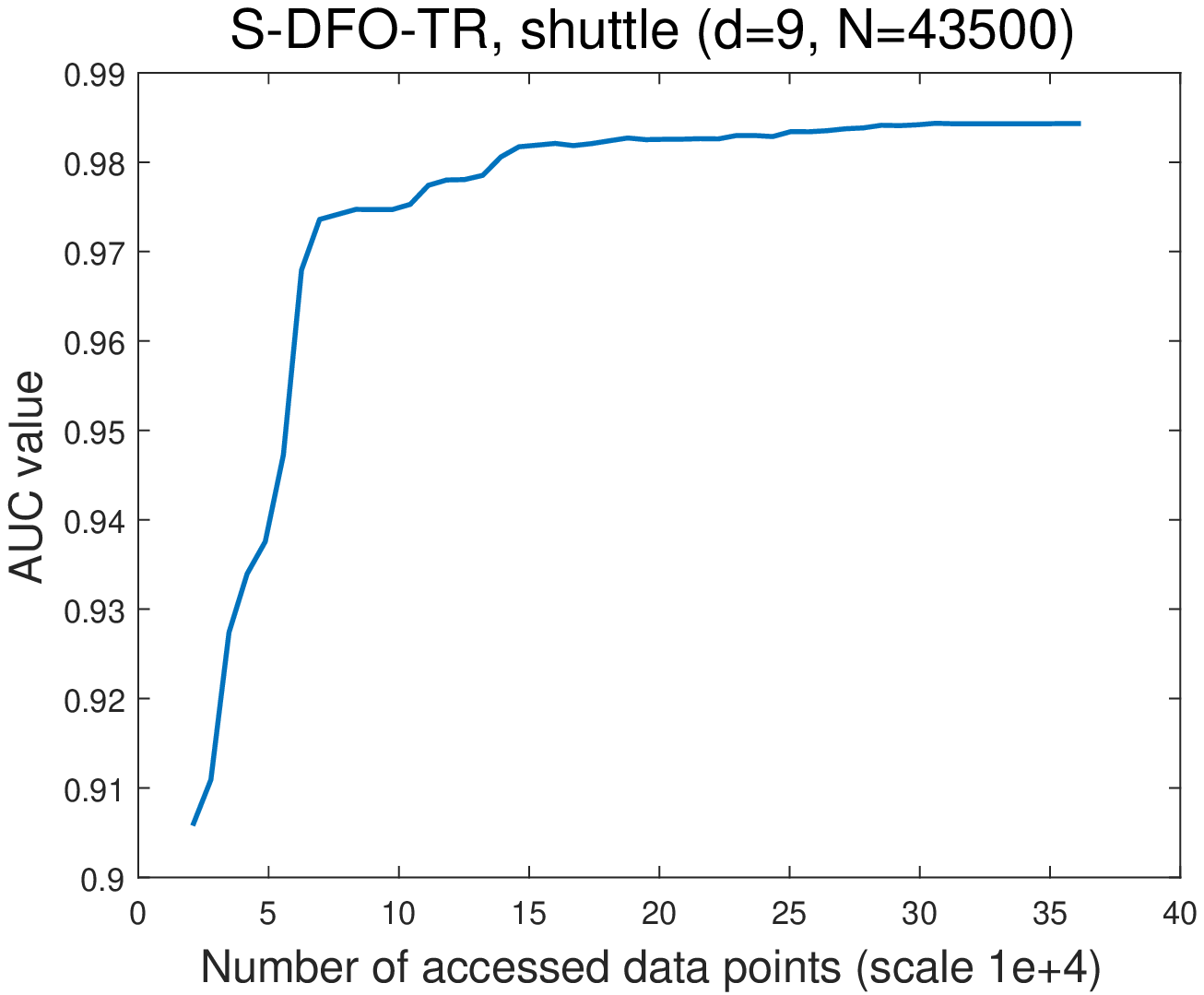}
  %\caption{DFO-TR, shuttle (d=9, N=43500)}
  \label{fig:sfig2}
\end{subfigure}
\caption{Comparison of stochastic {DFO-TR} and deterministic one in optimizing AUC function.}
\label{fig3}
\end{figure}

\subsection{Hyperparameter Tuning  of Cost-Sensitive RBF-Kernel SVM} \label{RBF_SVM}

Finally, we turn to hyperparamater tuning to show that DFO-TR can also outperform state-of-the-art methods on this problem. We consider tuning parameters of an RBF-kernel, cost-sensitive, SVM, with $\ell_2$ regularization parameter $\lambda$, kernel width $\gamma$, and positive class cost $c_p$. Thus, in this setting, we compare the performance of DFO-TR, random search, and Bayesian optimization algorithms, in tuning a three-dimensional hyperparameter $w = (\lambda,\gamma,{c_p})$, in order to achieve a high test accuracy. 

For the random search algorithm, as well as the Bayesian optimization algorithms, the search space is chosen as $\lambda \in [10^{-6},10^0]$, $\gamma \in [10^0,10^3]$,  as is done in \cite{kevin}, and $c_p \in [10^{-2},10^{2}]$. The setting of Algorithm \ref{al1} is as described in \S\ref{benchmarks}, while $w_0 = (\lambda_0,\gamma_0,{c_p}_0)$ is a three-dimensional vector randomly drawn from the search space defined above.

We have used the five-fold cross-validation with the \textit{train-validate-test} framework as follows: we used two folds as the training set for the SVM model, other two  folds as the validation set to compute and maximize the \textit{validation}  accuracy, and the remaining one as the test set to report the \textit{test} accuracy.  

%To be more specific, consider an algorithm $\mathcal{A}$, containing hyperparameters $\lambda, \gamma,$ and  $c_p$ with respective domains $\mathcal{X}_\lambda = [10^{-6},10^0], \mathcal{X}_\gamma = [10^0,10^3],$ and $\mathcal{X}_{c_p} = [10^{-2},10^{2}]$. If we define a three-dimensional hyperparameter $x = (\lambda,\gamma,c_p)$, then we will have hyperparameter space $\mathcal{X} = \mathcal{X}_\lambda \times \mathcal{X}_\gamma \times \mathcal{X}_{c_p}$. Now, $\mathcal{A}_{\bar{x}}$ denotes the algorithm $\mathcal{A}$ under fixed hyperparameter setting $\bar{x} \in \mathcal{X}$. Now, assume that we split the data set $\mathcal{D}$ into three different sets $\mathcal{D}_{train}$, $\mathcal{D}_{valid}$, and $\mathcal{D}_{test}$, respectively as \textit{training}, \textit{validation} and \textit{test} data set. In this setting, we define the validation loss $f_{\mathcal{L}}(\mathcal{A}_{\bar{x}}, \mathcal{D}_{train},\mathcal{D}_{valid})$, as the loss which algorithm  $\mathcal{A}_{\bar{x}}$ obtains on $\mathcal{D}_{valid}$ after being trained on $\mathcal{D}_{train}$. Under five-fold cross validation, the hyperparameter optimization problem can be interpreted as the black-box optimization, aiming to minimize the following black-box function $F(x) = \frac{1}{5} \sum_{i=1}^5 f_{\mathcal{L}} (\mathcal{A}_{\bar{x}}, \mathcal{D}^{(i)}_{train},\mathcal{D}^{(i)}_{valid})$.
%\begin{equation} \label{blackbox}
%F(x) = \frac{1}{5} \sum_{i=1}^5 f_{\mathcal{L}} (\mathcal{A}_{\bar{x}}, \mathcal{D}^{(i)}_{train},\mathcal{D}^{(i)}_{valid}).
%\end{equation}

Figure \ref{fig svm} illustrates the performance of DFO-TR versus random search and Bayesian optimization algorithms, in terms of the average test accuracy over the number of function evaluations. As we can see, DFO-TR constantly surpasses random search and Bayesian optimization algorithms. 
It is worth mentioning that random search is competitive with the BO methods and in contrast to \S\ref{benchmarks} and \S\ref{BO}, SMAC performs the best among the Bayesian optimization algorithms.

% \begin{figure}[H]
%\centering
%%\begin{tabular}{c:c:c:c}
% \epsfig{file=../images/SVM_diabetes,width=0.23\textwidth}
%   \epsfig{file=../images/SVM_fourclass,width=0.23\textwidth}
% \epsfig{file=../images/SVM_magic,width=0.23\textwidth}
%  \epsfig{file=../images/SVM_satimage,width=0.23\textwidth}
% \caption{\footnotesize \new{Comparison of DFO-TR, random search and Bayesian Optimization on tuning RBF-kernel SVM hyperparameters}}
%   \label{fig:ms}
% \end{figure}
%

%%%%% Figure 5.4 %%%%%%
\begin{figure} [ht]
\begin{subfigure}
  \centering
  \includegraphics[height=0.35\linewidth,width=0.49\linewidth]{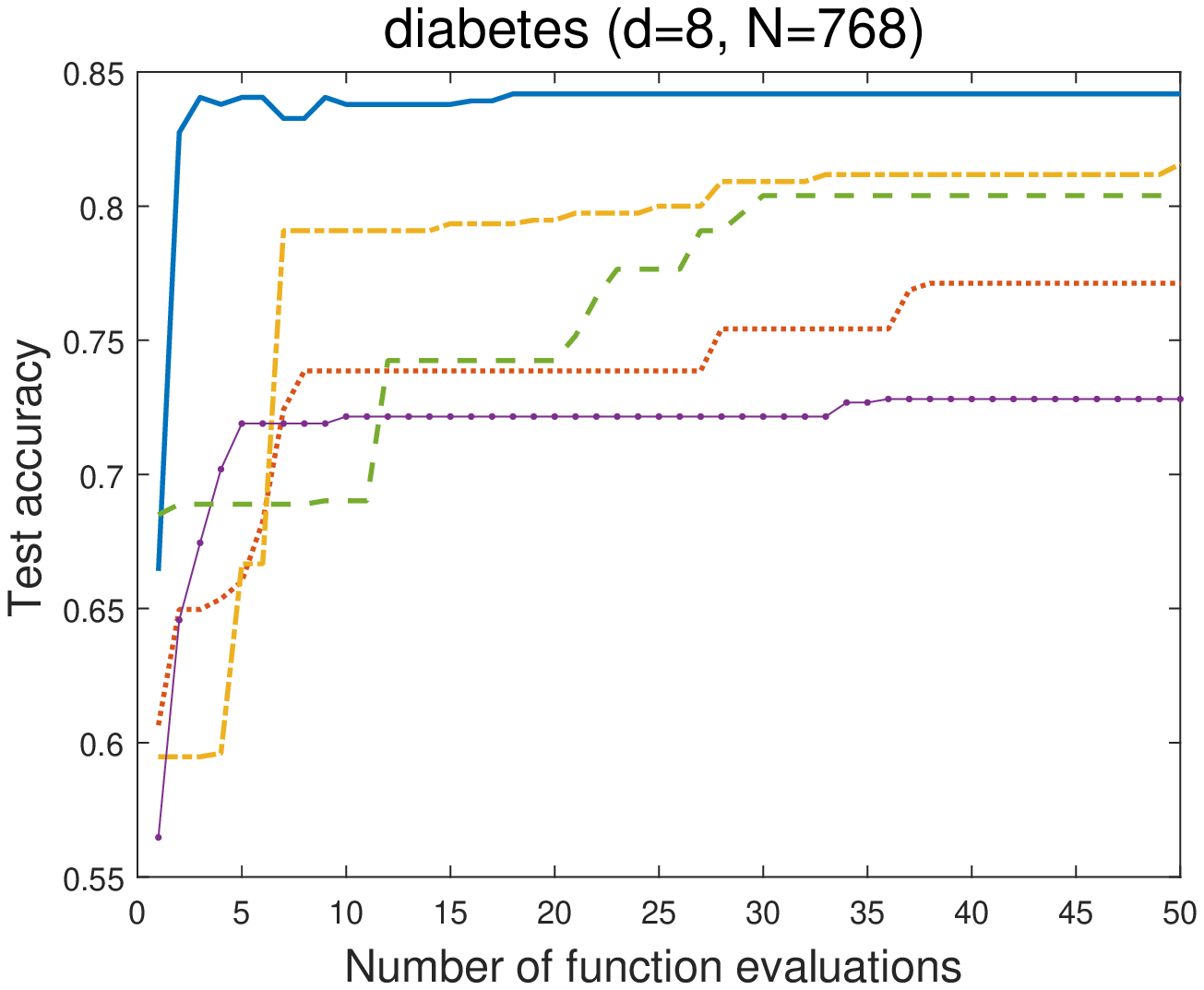}
%  \caption{diabetes (d=8, N=768)}
  \label{fig:sfig1}
\end{subfigure}%
\begin{subfigure}
  \centering
  \includegraphics[height=0.35\linewidth,width=0.5\linewidth]{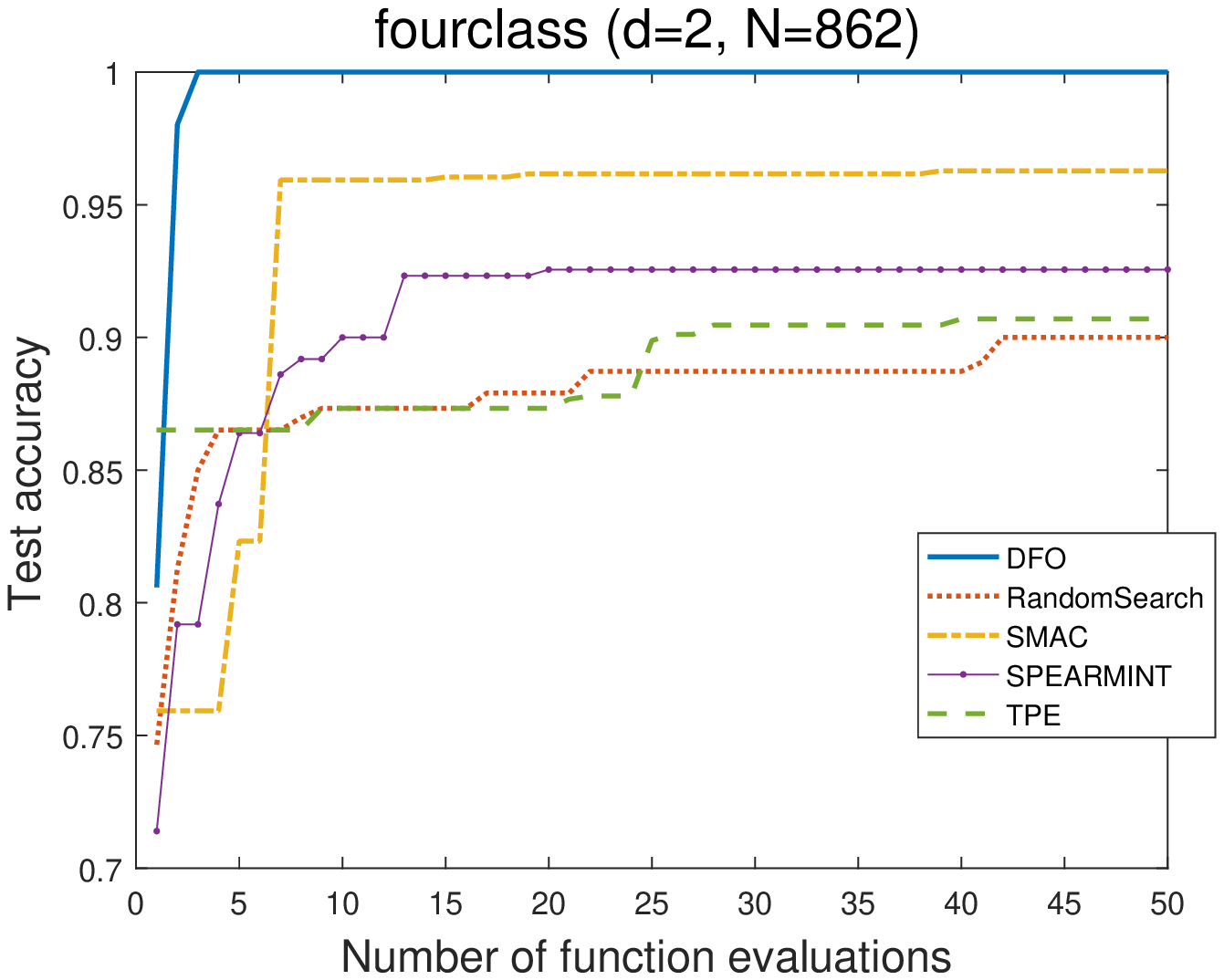}
%  \caption{diabetes (d=8, N=768)}
  \label{fig:sfig1}
\end{subfigure}%
\begin{subfigure}
 \centering
  \includegraphics[height=0.35\linewidth,width=0.49\linewidth]{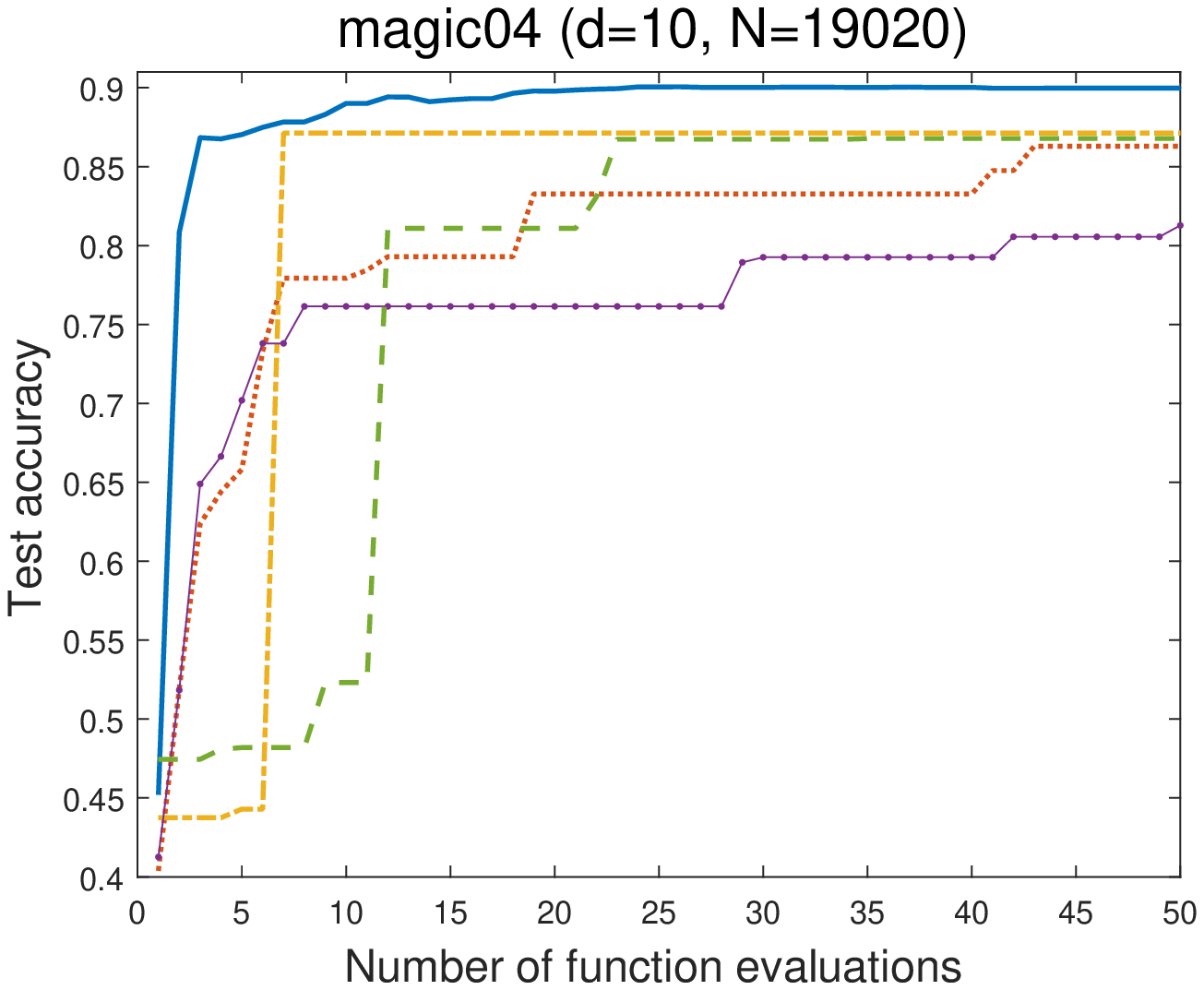}
 % \caption{magic (d=10, N=19020)}
  \label{fig:sfig5}
\end{subfigure}%
\begin{subfigure}
  \centering
  \includegraphics[height=0.35\linewidth,width=0.49\linewidth]{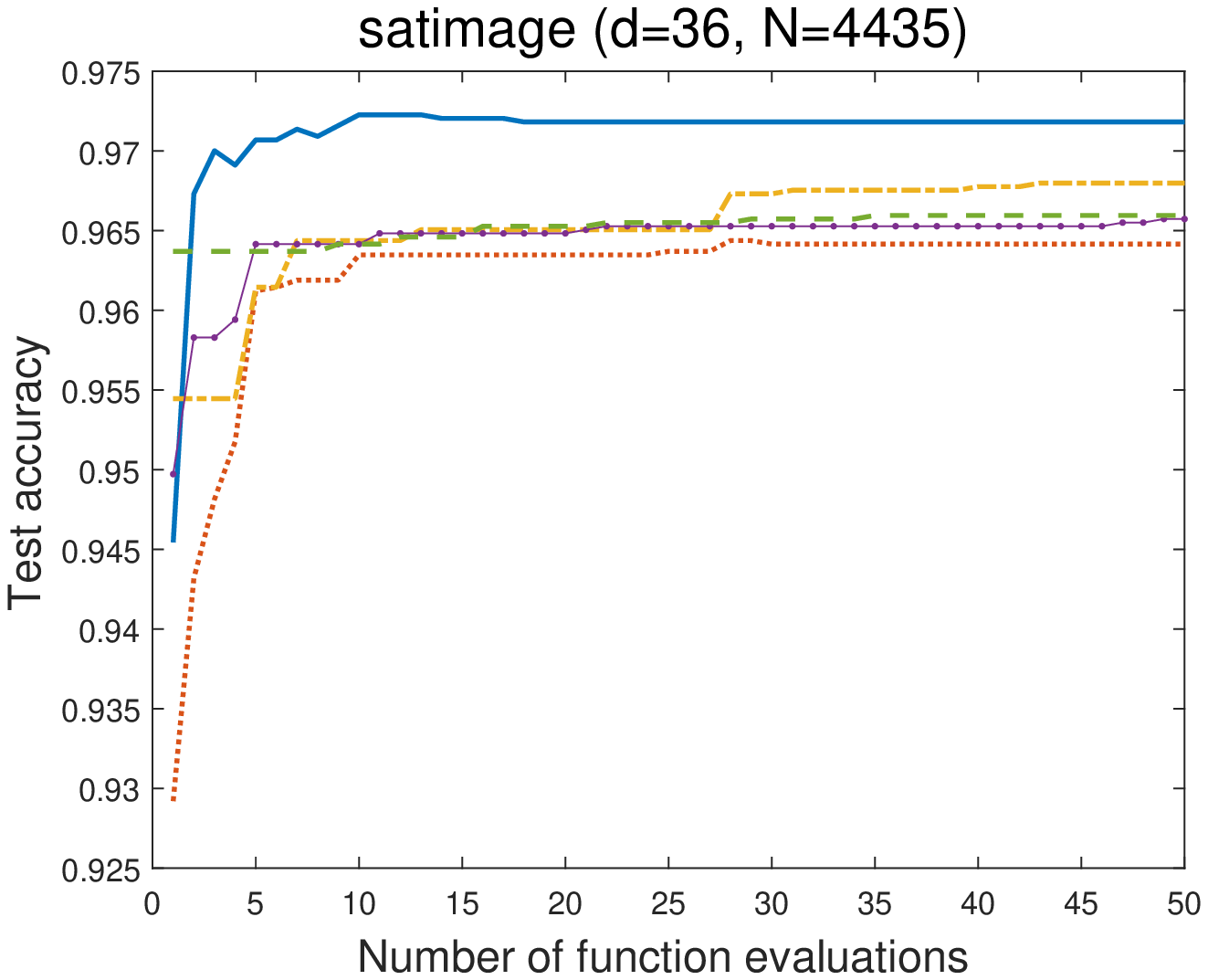}
 % \caption{satimage (d=36, N=4435)}
  \label{fig:sfig6}
\end{subfigure} %

\caption{Comparison of DFO-TR, random search and BO algorithms on tuning RBF-kernel SVM hyperparameters.}
\label{fig svm}
\end{figure}
%
%

%*********
% Section
%*********
\section{Conclusion}\label{fourth} 
In this work, we demonstrate that model-based derivative free optimization is  a better alternative to Bayesian optimization for some black-box optimization tasks arising in machine learning. We rely on an existing convergent stochastic trust region framework to provide theoretical foundation for the chosen algorithm, and we demonstrate the efficiency of a practical implementation of DFO-TR for optimizing AUC function over the set of linear classifiers, hyperparameter tuning, and on other benchmark problems.

%Our experimental results show that, in terms of the average value of AUC, minimizing the pairwise hinge loss performs quite competitive to directly maximize AUC via \textit{DFO-TR}. However, in terms of computational efforts, minimizing the pairwise hinge loss needs more function evaluations. Furthermore, the other advantage of  \textit{DFO-TR} is that it does not compute any gradient, which has the computational complexity of $\mathcal{O}(N \log N)$, the same as computing the function values.
%
%Moreover, we proposed an optimization algorithm based on \textit{DFO-TR} and proximal quasi-Newton algorithm, in order to tune the parameters of the cost sensitive logistic regression, to build a linear classifier with high ranking performance. We guaranteed the performance of this algorithm under some practical assumptions. The computational results show that in 10 percent of our data sets, optimizing the parameters of the cost sensitive logistic regression improves the ranking quality of the classifier. 

\newpage
% In the unusual situation where you want a paper to appear in the
% references without citing it in the main text, use \nocite
\nocite{langley00}
\bibliography{references}

\begin{thebibliography}{36}
\providecommand{\natexlab}[1]{#1}
\providecommand{\url}[1]{\texttt{#1}}
\expandafter\ifx\csname urlstyle\endcsname\relax
  \providecommand{\doi}[1]{doi: #1}\else
  \providecommand{\doi}{doi: \begingroup \urlstyle{rm}\Url}\fi

\bibitem[Bergstra et~al.(2011)Bergstra, Bardenet, Bengio, and Kegl]{bergstra2}
Bergstra, J., Bardenet, R., Bengio, Y., and Kegl, B.
\newblock Algorithms for hyper-parameter optimization.
\newblock \emph{In Proc. of NIPS-11}, 2011.

\bibitem[Billups \& Larson(2016)Billups and Larson]{Larson2015}
Billups, S.~C. and Larson, J.
\newblock Stochastic derivative-free optimization using a trust region
  framework.
\newblock \emph{J. Computational Optimization and Apps}, 64\penalty0
  (2):\penalty0 619--645, 2016.

\bibitem[Blanchet et~al.(2016)Blanchet, Cartis, Menickelly, and
  Scheinberg]{BlanchetCartisMenickellyScheinberg2016}
Blanchet, J., Cartis, C., Menickelly, M., and Scheinberg, K.
\newblock Convergence rate analysis of a stochastic trust region method for
  nonconvex optimization.
\newblock \emph{\url{https://arxiv.org/pdf/1609.07428}}, 2016.

\bibitem[Bradley(1997)]{bradley}
Bradley, A.~P.
\newblock The use of the area under the \textsc{ROC} curve in the evaluation of
  machine learning algorithms.
\newblock \emph{Pattern Recognition}, 30:\penalty0 1145--1159, 1997.

\bibitem[Brefeld \& Scheffer(2005)Brefeld and Scheffer]{scheffer}
Brefeld, U. and Scheffer, T.
\newblock \textsc{AUC} maximizing support vector learning.
\newblock \emph{ICML}, 2005.

\bibitem[Brochu et~al.(2010)Brochu, Cora, and de~Freitas]{brochu}
Brochu, E., Cora, V.M., and de~Freitas, N.
\newblock A tutorial on \textsc{B}ayesian optimization of expensive cost
  functions, with application to active user modeling and hierarchical
  reinforcement learning.
\newblock \emph{eprint arXiv:1012.2599, arXiv.org}, 2010.

\bibitem[Calders \& Jaroszewicz(2007)Calders and Jaroszewicz]{calders}
Calders, T. and Jaroszewicz, S.
\newblock Efficient \textsc{AUC} optimization for classification.
\newblock \emph{Discovery in Databases, Warsaw, Poland}, pp.\  42--53, 2007.

\bibitem[Chen et~al.(2015)Chen, Menickelly, and Scheinberg]{matt}
Chen, R., Menickelly, M., and Scheinberg, K.
\newblock Stochastic optimization using a trust-region method and random
  models.
\newblock \emph{\url{https://arxiv.org/pdf/1504.04231}}, 2015.

\bibitem[Conn et~al.(1997)Conn, Scheinberg, and \mbox{Ph.}
  L.~Toint]{ARConn_KScheinberg_PhLToint_1997b}
Conn, A.~R., Scheinberg, K., and \mbox{Ph.} L.~Toint.
\newblock Recent progress in unconstrained nonlinear optimization without
  derivatives.
\newblock \emph{Mathematical Programming}, 79:\penalty0 397--414, 1997.

\bibitem[Conn et~al.(2000)Conn, Gould, and Toint]{TRBook}
Conn, A.~R., Gould, N. I.~M., and Toint, P.~T.
\newblock \emph{Trust Region Methods}.
\newblock MPS/SIAM Series on Optimization. SIAM, Philadelphia, 2000.

\bibitem[Conn et~al.(2009)Conn, Scheinberg, and Vicente]{conn}
Conn, A.R., Scheinberg, K., and Vicente, L.N.
\newblock \emph{Introduction To Derivative-Free Optimization}.
\newblock Society for Industrial and Applied Mathematics. Philadelphia, 2009.

\bibitem[Cortes \& Mohri(2004)Cortes and Mohri]{cortes}
Cortes, C. and Mohri, M.
\newblock \textsc{AUC} optimization vs. error rate minimization.
\newblock \emph{In S. Thrun, L. Saul, and B. Scholkopf, editors, Advances in
  Neural Information Processing Systems 16. MIT Press, Cambridge, MA}, 2004.

\bibitem[Eggensperger et~al.(2013)Eggensperger, Feurer, Hutter, Bergstra,
  Snoek, Hoos, and Brown]{hutter}
Eggensperger, K., Feurer, M., Hutter, F., Bergstra, J., Snoek, J., Hoos, H.~H.,
  and Brown, K.~L.
\newblock Towards an empirical foundation for assessing bayesian optimization
  of hyperparameters.
\newblock \emph{In NIPS Workshop on Bayesian Optimization in Theory and
  Practice}, 2013.

\bibitem[Ferri et~al.(2002)Ferri, Flach, and Hernandez-Orallo]{flach}
Ferri, C., Flach, P., and Hernandez-Orallo, J.
\newblock Learning decision trees using the area under the \textsc{ROC} curve.
\newblock \emph{ICML}, 2002.

\bibitem[Hanley \& McNeil(1982)Hanley and McNeil]{hanley}
Hanley, J.~A. and McNeil, B.~J.
\newblock The meaning and use of the area under a receiver operating
  characteristic (\textsc{ROC}) curve.
\newblock \emph{Radiology}, 1982.

\bibitem[Herschtal \& Raskutti(2004)Herschtal and Raskutti]{raskutti}
Herschtal, A. and Raskutti, B.
\newblock Optimizing area under the \textsc{ROC} curve using gradient descent.
\newblock \emph{In ICML, ACM Press, New York}, 2004.

\bibitem[Hutter et~al.(2011)Hutter, Hoos, and Leyton-Brown]{hoos}
Hutter, F., Hoos, H.~H., and Leyton-Brown, K.
\newblock Sequential model-based optimization for general algorithm
  configuration.
\newblock \emph{In Proc. of LION-5}, 2011.

\bibitem[Jamieson \& Talwalkar(2015)Jamieson and Talwalkar]{kevin}
Jamieson, Kevin and Talwalkar, Ameet.
\newblock Non-stochastic best arm identification and hyperparameter
  optimization.
\newblock \emph{\url{https://arxiv.org/pdf/1502.07943}}, 2015.

\bibitem[Joachims(2006)]{Joachims}
Joachims, T.
\newblock Training linear \textsc{SVM}s in linear time.
\newblock \emph{In ACM SIGKDD}, pp.\  217--226, 2006.

\bibitem[Jones et~al.(1993)Jones, Perttunen, and Stuckman]{DRJones}
Jones, D.~R., Perttunen, C.~D., and Stuckman, B.~E.
\newblock Lipschitzian optimization without the lipschitz constant.
\newblock \emph{J. Optimization Theory and Apps}, 79:\penalty0 157--181, 1993.

\bibitem[Jones et~al.(1998)Jones, Schonlau, and Welch]{Jones}
Jones, D.~R., Schonlau, M., and Welch, W.~J.
\newblock Efficient global optimization of expensive black-box functions.
\newblock \emph{J. Global Optimization}, 13:\penalty0 455--492, 1998.

\bibitem[Li et~al.(2016)Li, Jamieson, DeSalvo, Rostamizadeh, and
  Talwalkar]{LLi}
Li, L., Jamieson, K., DeSalvo, G., Rostamizadeh, A., and Talwalkar, A.
\newblock Hyperband: A novel bandit-based approach to hyperparameter
  optimization.
\newblock \emph{\url{https://arxiv.org/abs/1603.06560}}, 2016.

\bibitem[Lu et~al.(2010)Lu, Tang, and Yao]{tang2}
Lu, X., Tang, K., and Yao, X.
\newblock Evolving neural networks with maximum \textsc{AUC} for imbalanced
  data classification.
\newblock \emph{LNAI}, 6076:\penalty0 335--342, 2010.

\bibitem[M.~Schonlau \& Jones(1998)M.~Schonlau and Jones]{schonlau}
M.~Schonlau, W. J.~Welch and Jones, D.~R.
\newblock Global versus local search in constrained optimization of computer
  models.
\newblock \emph{In New Developments and Applications in Experimental Design},
  34:\penalty0 11--25, 1998.

\bibitem[Mann \& Whitney(1947)Mann and Whitney]{mann}
Mann, H.~B. and Whitney, D.~R.
\newblock On a test whether one of two random variables is stochastically
  larger than the other.
\newblock \emph{Ann. Math. Statist}, 18:\penalty0 50--60, 1947.

\bibitem[Mockus(1994)]{mockus}
Mockus, J.
\newblock Application of bayesian approach to numerical methods of global and
  stochastic optimization.
\newblock \emph{J. Global Optimization}, 4:\penalty0 347--365, 1994.

\bibitem[More \& Wild(2009)More and Wild]{JJMore}
More, J.~J. and Wild, S.~M.
\newblock Benchmarking derivative-free optimization algorithms.
\newblock \emph{SIAM J. Optim}, 20:\penalty0 172--191, 2009.

\bibitem[Powell(2004)]{MJDPowell_2004}
Powell, M. J.~D.
\newblock Least {F}robenius norm updating of quadratic models that satisfy
  interpolation conditions.
\newblock \emph{Mathematical Programming}, 100:\penalty0 183--215, 2004.

\bibitem[Rudin \& Schapire(2009)Rudin and Schapire]{rudin}
Rudin, C. and Schapire, R.~E.
\newblock Margin-based ranking and an equivalence between adaboost and
  rankboost.
\newblock \emph{Journal of Machine Learning Research}, 10:\penalty0 2193--2232,
  2009.

\bibitem[Scheinberg \& Toint(2010)Scheinberg and Toint]{ScheToin09}
Scheinberg, K. and Toint, {Ph}.~L.
\newblock Self-correcting geometry in model-based algorithms for
  derivative-free unconstrained optimization.
\newblock \emph{SIAM J. Optim}, 20(6):\penalty0 3512--3532, 2010.

\bibitem[Shashaani et~al.(2015)Shashaani, Hashemi, and
  Pasupathy]{2015sarhasetal}
Shashaani, S., Hashemi, F.~S., and Pasupathy, R.
\newblock \textsc{ASTRO-DF}: A class of adaptive sampling trust-region
  algorithms for derivative-free simulation optimization.
\newblock \emph{\url{https://arxiv.org/pdf/1610.06506}}, 2015.

\bibitem[Snoek et~al.(2012)Snoek, Larochelle, and Adams]{snoek}
Snoek, J., Larochelle, H., and Adams, R.P.
\newblock Practical bayesian optimization of machine learning algorithms.
\newblock \emph{In Proc. of NIPS-12}, 2012.

\bibitem[Steck(2007)]{steck}
Steck, H.
\newblock Hinge rank loss and the area under the \textsc{ROC} curve.
\newblock \emph{In ECML, Lecture Notes in Computer Science}, pp.\  347--358,
  2007.

\bibitem[Tong(1990)]{tong}
Tong, Y.L.
\newblock The multivariate normal distribution.
\newblock \emph{Springer Series in Statistics}, 1990.

\bibitem[Yan et~al.(2003)Yan, Dodier, Mozer, and Wolniewicz]{mozer}
Yan, L., Dodier, R., Mozer, M.C., and Wolniewicz, R.
\newblock Optimizing classifier performance via approximation to the
  wilcoxon-mann-witney statistic.
\newblock \emph{Proceedings of the Twentieth Intl. Conf. on Machine Learning,
  AAAI Press, Menlo Park, CA}, pp.\  848--855, 2003.

\bibitem[Zhao et~al.(2011)Zhao, Hoi, Jin, and Yang]{jin}
Zhao, P., Hoi, S. C.~H., Jin, R., and Yang, T.
\newblock Online \textsc{AUC} maximization.
\newblock \emph{In Proceedings of the 28th ICML, Bellevue, WA, USA}, 2011.

\end{thebibliography}
\bibliographystyle{icml2017}

\onecolumn
\vskip 1.5in
\textbf{\Large{Appendix}} 

In this section, we provide some complementary results. Table \ref{t1_2} compares the performance of the linear classifier obtained by directly maximizing AUC via DFO-TR, versus approximately maximizing AUC by utilizing gradient descent approach to minimize the pair-wise hinge loss. We utilized the same strategy of using data sets as is done in \S\ref{subsub}. Therefore, the AUC value in Table \ref{t1_2} is the average AUC value of 20 different runs. As we can see, compared to minimizing pair-wise hinge loss, DFO-TR achieves competitive AUC value in less number of function evaluations.

Figures \ref{figap_1} and \ref{figap_2} support the computational results in \S\ref{BO} and illustrate the per iteration behavior of each method. We can see that DFO improves the objective values faster than the Bayesian optimization algorithms.
%]

\begin{table} [H]
%\small
 \centering
 \captionsetup{justification=centering}
  \caption{\normalsize{Comparing deterministic DFO-TR vs. pairwise hinge loss minimization}}
    \label{t1_2}
     \vskip 0.1in
  \begin{tabular}{ |c|cc|cc|}
\hline
\multirow{2}{*}{\textbf{Algorithm}} 
& \multicolumn{2}{c|}{\textbf{fourclass}}  &\multicolumn{2}{c|}{\textbf{svmguide1}}\\   
& {AUC}&{num. fevals}&{AUC}&{num. fevals}\\  \hline
Pair-Wise Hinge&0.836226$\pm$ 0.000923&480&0.989319 $\pm$ 0.000008&334\\
DFO-TR&0.836311$\pm$ 0.000921&254&0.989132$\pm$ 0.000007&254\\ \hline
\multirow{2}{*}{\textbf{Algorithm}} 
& \multicolumn{2}{c|}{\textbf{diabetes}}  &\multicolumn{2}{c|}{\textbf{shuttle}}\\   
& {AUC}&{num. fevals}&{AUC}&{num. fevals}\\  \hline
Pair-Wise Hinge&0.830852$\pm$0.001015&348&0.988625$\pm$0.000021&266\\ 
DFO-TR&0.830402$\pm$0.00106&254&0.987531$\pm$0.000035&254\\ \hline
\multirow{2}{*}{\textbf{Algorithm}} 
& \multicolumn{2}{c|}{\textbf{vowel}}  &\multicolumn{2}{c|}{\textbf{magic04}}\\   
& {AUC}&{num. fevals}&{AUC}&{num. fevals}\\  \hline
Pair-Wise Hinge&0.975586$\pm$0.000396&348&0.843085 $\pm$ 0.000208&417\\ 
DFO-TR&0.973785$\pm$0.000506&254&0.843511$\pm$0.000213&254\\ \hline
\multirow{2}{*}{\textbf{Algorithm}} 
& \multicolumn{2}{c|}{\textbf{letter}}  &\multicolumn{2}{c|}{\textbf{segment}}\\   
& {AUC}&{num. fevals}&{AUC}&{num. fevals}\\  \hline
Pair-Wise Hinge&0.986699$\pm$0.000037&517&0.993134 $\pm$ 0.000023&753\\
DFO-TR&0.985119$\pm$0.000042&254&0.99567$\pm$0.00071&254\\ \hline
\multirow{2}{*}{\textbf{Algorithm}} 
& \multicolumn{2}{c|}{\textbf{ijcnn1}}  &\multicolumn{2}{c|}{\textbf{svmguide3}}\\   
& {AUC}&{num. fevals}&{AUC}&{num. fevals}\\  \hline
Pair-Wise Hinge&0.930685$\pm$0.000204&413&0.793116$\pm$0.001284&368\\
DFO-TR&0.910897$\pm$ 0.000264&254&0.775246$\pm$ 0.002083&254\\ \hline
\multirow{2}{*}{\textbf{Algorithm}} 
& \multicolumn{2}{c|}{\textbf{german}}  &\multicolumn{2}{c|}{\textbf{satimage}}\\
& {AUC}&{num. fevals}&{AUC}&{num. fevals}\\  \hline
Pair-Wise Hinge&0.792402$\pm$ 0.000795&421&0.769505$\pm$0.000253&763\\
DFO-TR&0.791048$\pm$0.000846&254&0.757554$\pm$0.000236&254\\ \hline
\end{tabular}
\centering
\end{table}

\begin{figure*} [t]
\begin{subfigure} 
  \centering
  \includegraphics[height=0.38\textwidth,width=0.5\linewidth]{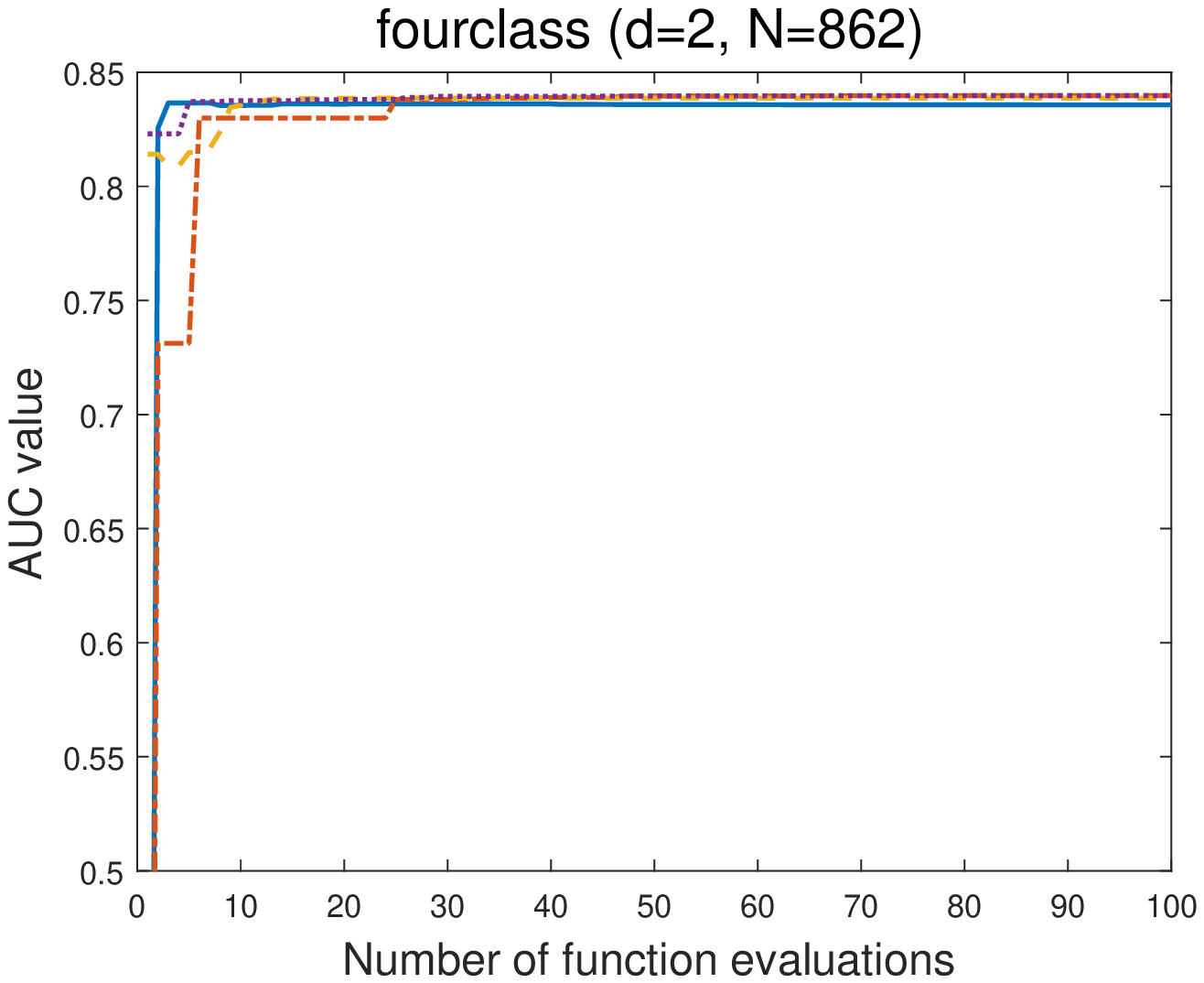}
  %\caption{fourclass(d=2, N=862)}
  \label{fig:sfig1}
\end{subfigure}%
\begin{subfigure}
  \centering
  \includegraphics[height=0.38\textwidth,width=0.5\linewidth]{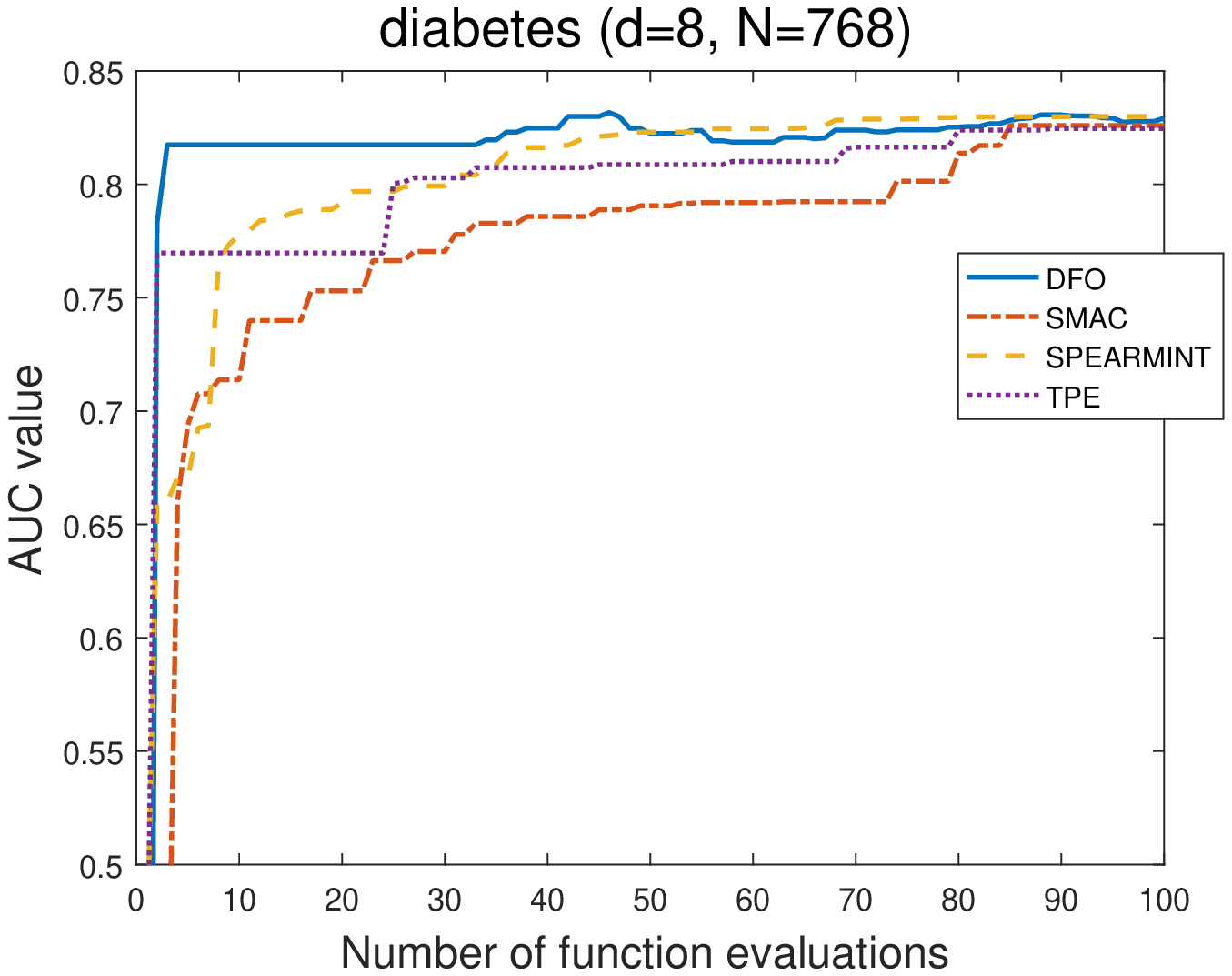}
 % \caption{diabetes(d=8, N=768)}
  \label{fig:sfig2}
\end{subfigure}%
\begin{subfigure}
  \centering
  \includegraphics[height=0.38\textwidth,width=0.5\linewidth]{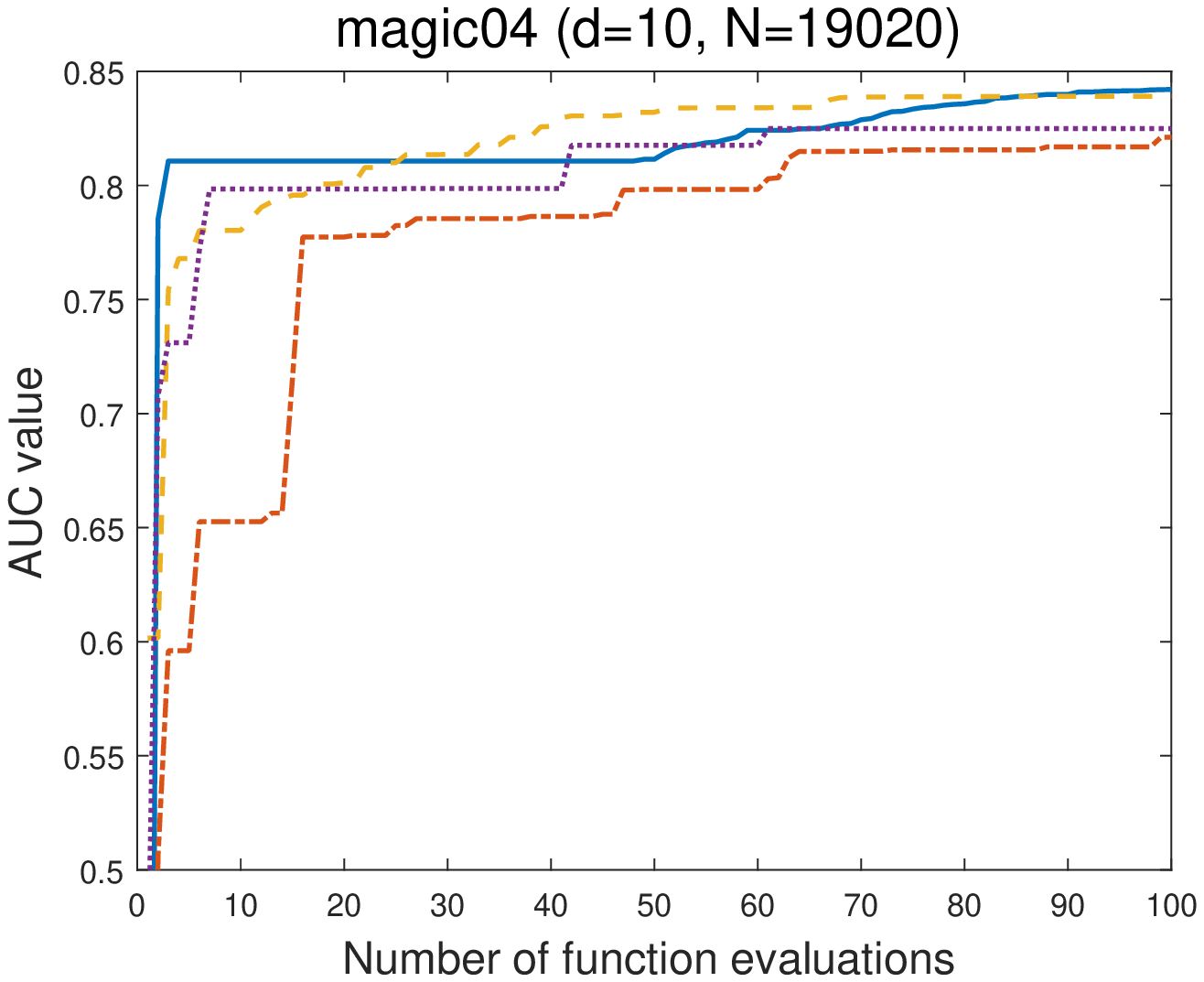}
  %\caption{magic04(d=10, N=19020)}
  \label{fig:sfig1}
\end{subfigure}%
\begin{subfigure}
  \centering
  \includegraphics[height=0.38\textwidth,width=0.5\linewidth]{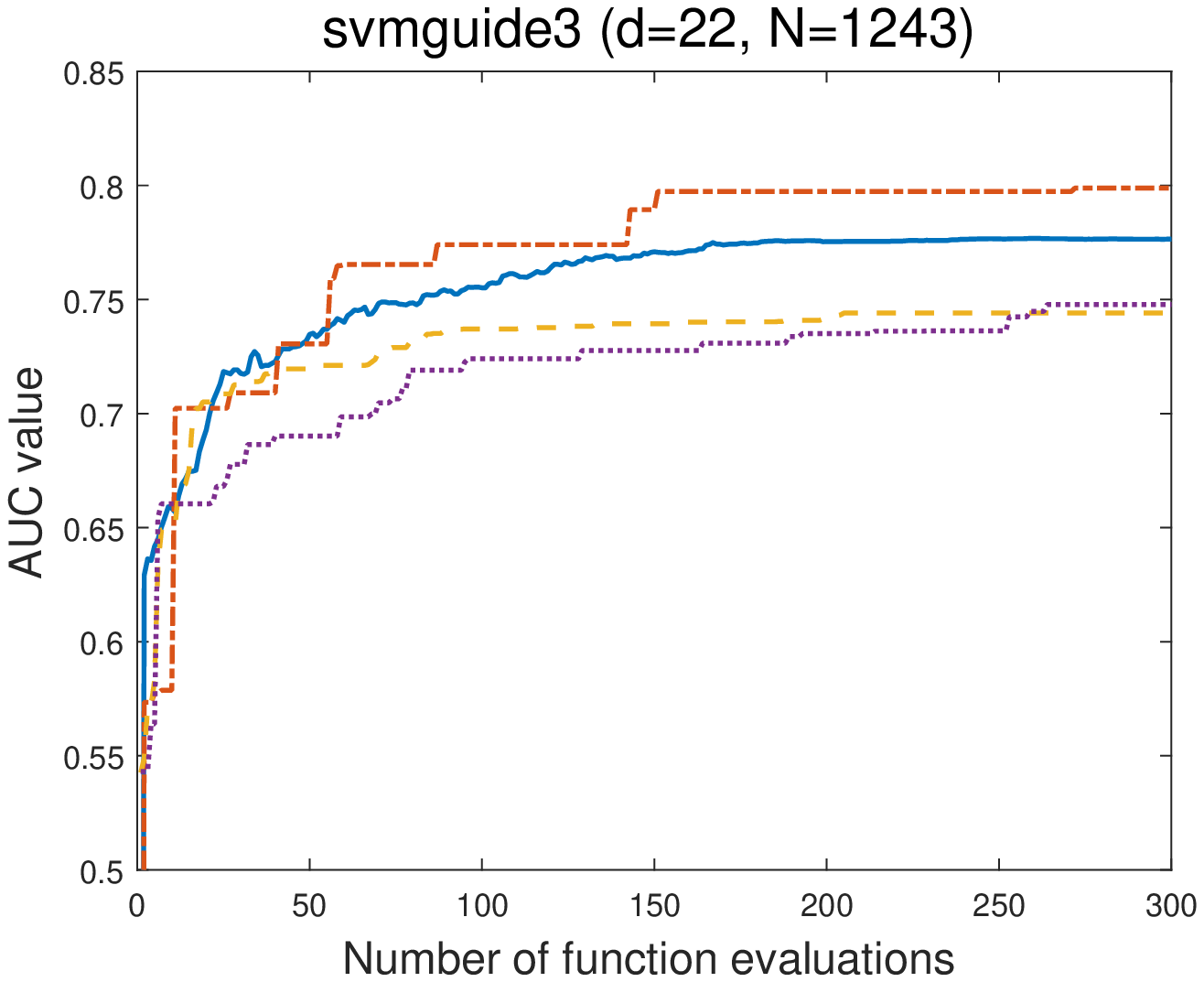}
  %\caption{svmguide3(d=22, N=1243)}
  \label{fig:sfig2}
\end{subfigure}%
\begin{subfigure}
  \centering
  \includegraphics[height=0.38\textwidth,width=0.5\linewidth]{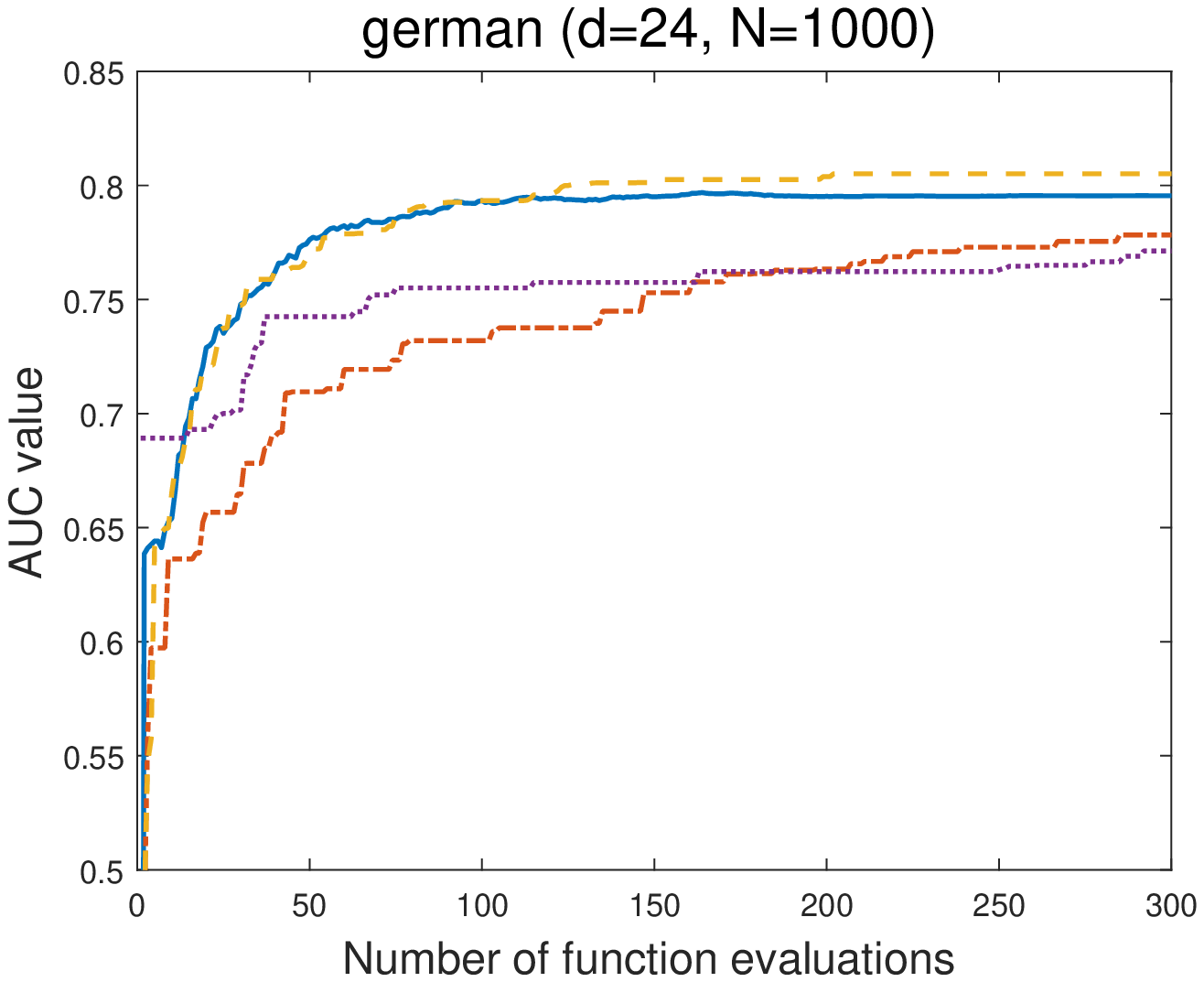}
 % \caption{german(d=24, N=1000)}
  \label{fig:sfig5}
\end{subfigure}%
\begin{subfigure}
  \centering
  \includegraphics[height=0.38\textwidth,width=0.5\linewidth]{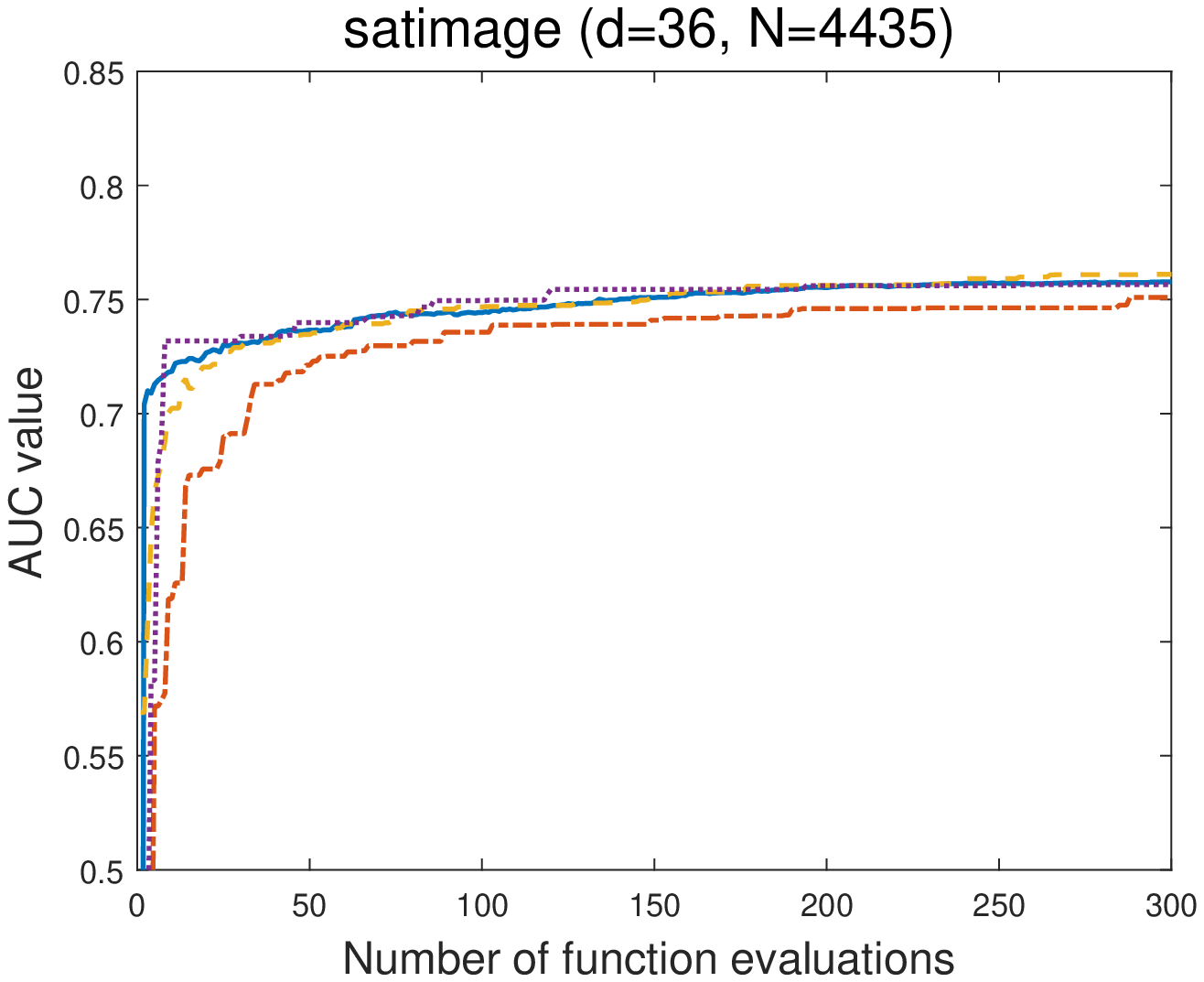}
 % \caption{satimage(d=36, N=4435)}
  \label{fig:sfig6}
\end{subfigure}%
\caption{\normalsize{DFO-TR vs. BO algorithms (First Part).}}
\label{figap_1}
\end{figure*}

\begin{figure*}[t]
\begin{subfigure}
  \centering
  \includegraphics[height=0.38\textwidth,width=0.5\linewidth]{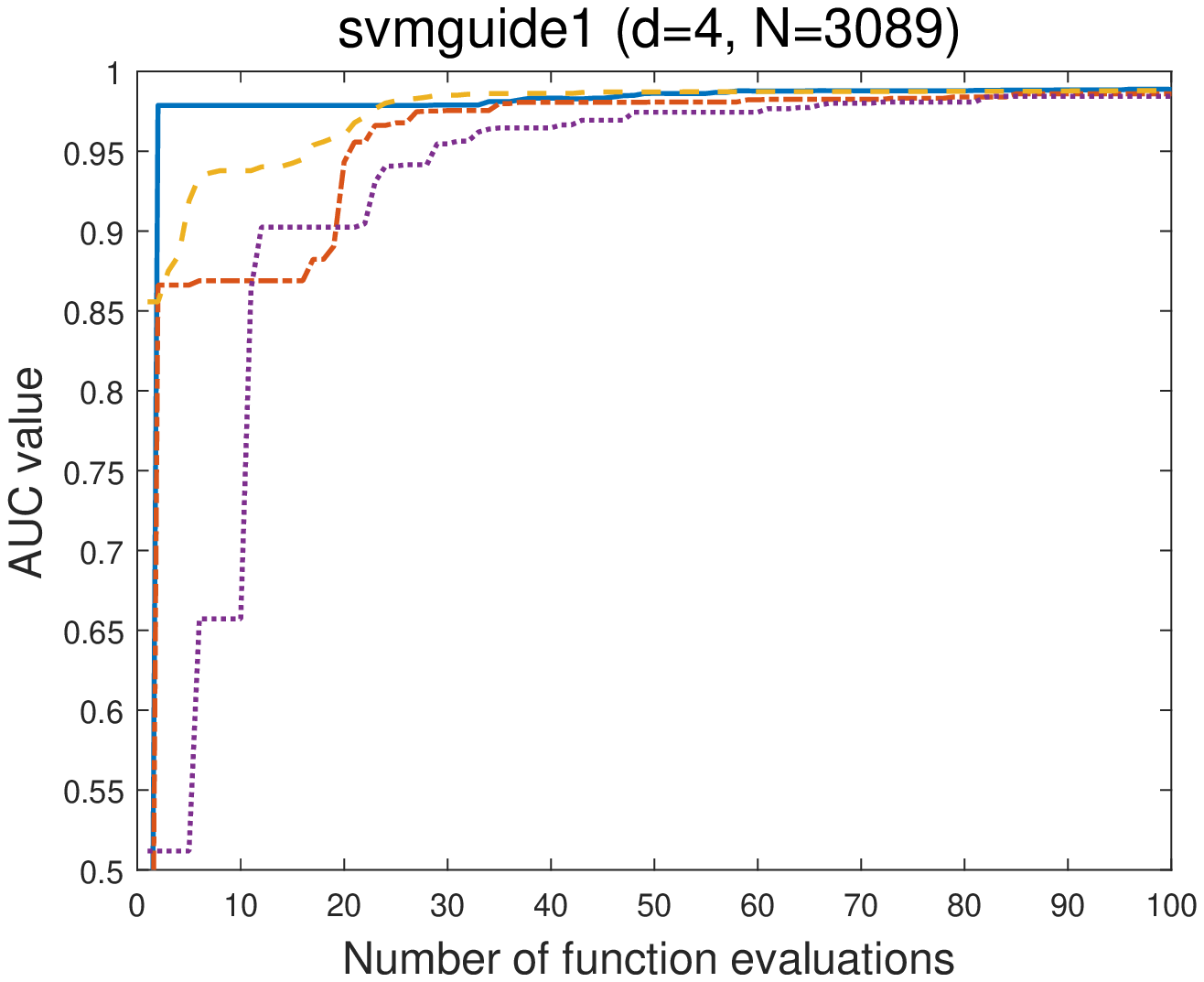}
  %\caption{svmguide1(d=4, N=3089)}
  \label{fig:sfig1}
\end{subfigure}%
\begin{subfigure}
  \centering
  \includegraphics[height=0.38\textwidth,width=0.5\linewidth]{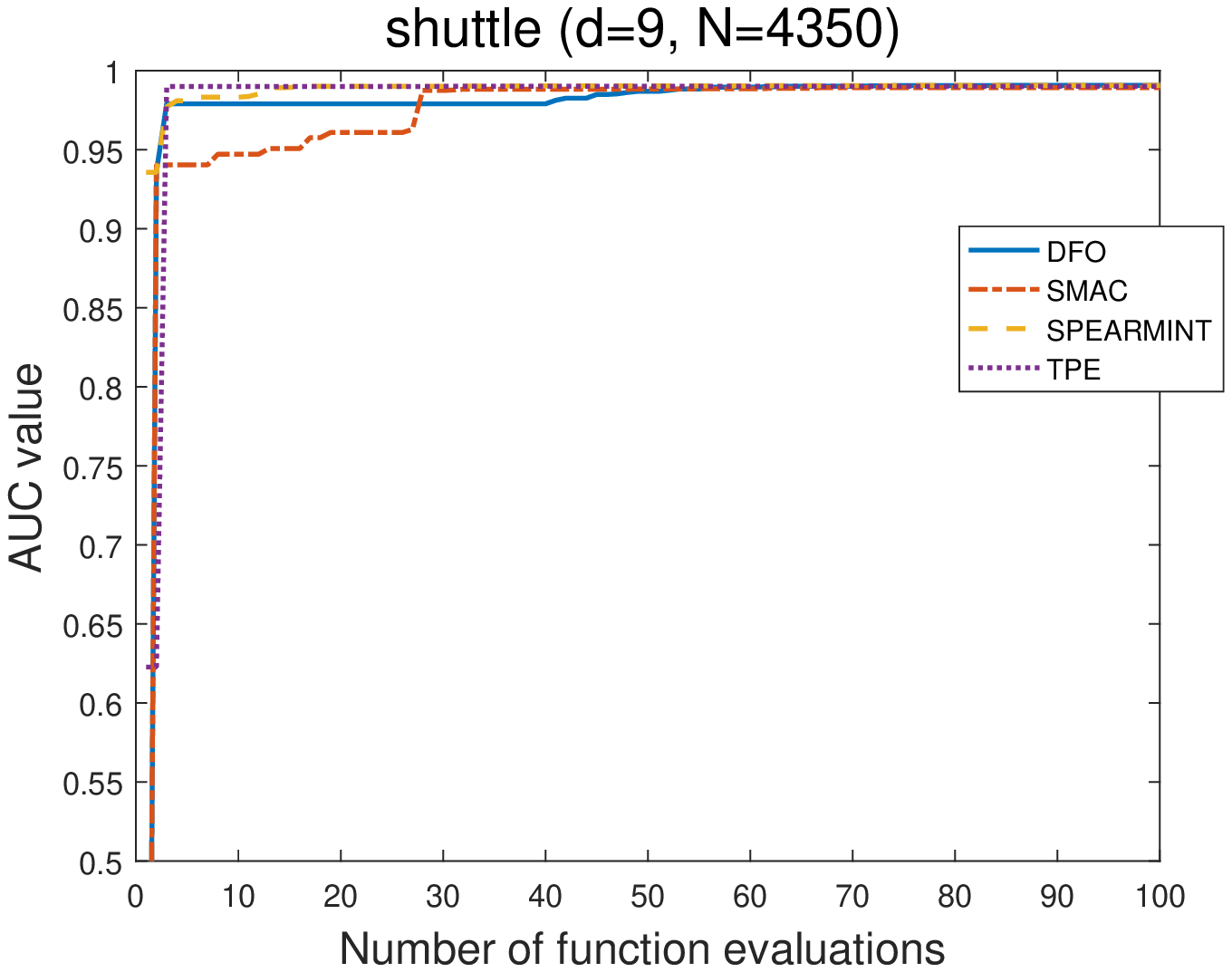}
 % \caption{shuttle(d=9, N=43500)}
  \label{fig:sfig2}
\end{subfigure} %
\begin{subfigure}
  \centering
  \includegraphics[height=0.38\textwidth,width=0.5\linewidth]{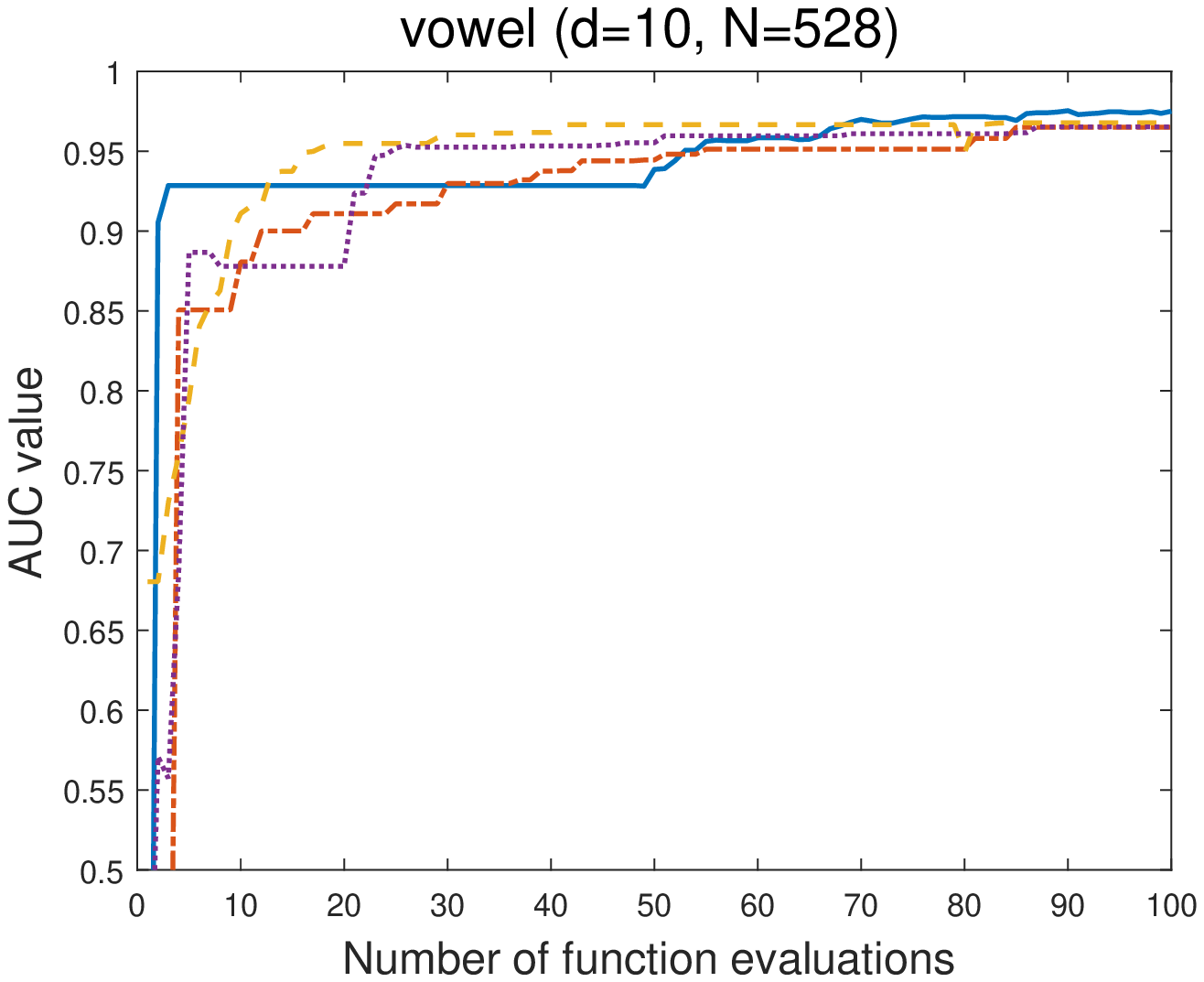}
 % \caption{vowel(d=10, N=528)}
  \label{fig:sfig3}
\end{subfigure}%
\begin{subfigure}
  \centering
  \includegraphics[height=0.38\textwidth,width=0.5\linewidth]{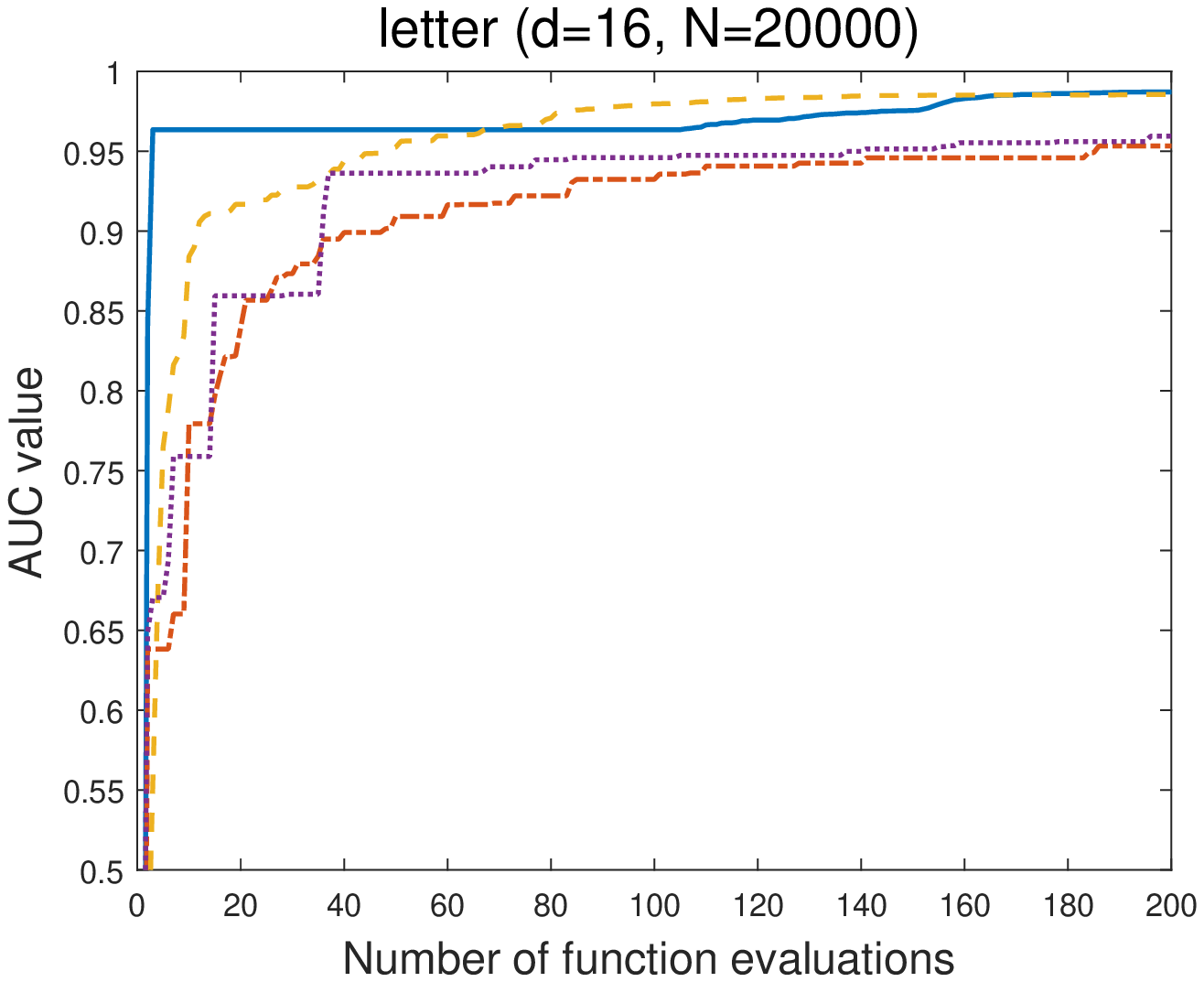}
%  \caption{letter(d=16, N=20000)}
  \label{fig:sfig4}
\end{subfigure} %
\begin{subfigure}
  \centering
  \includegraphics[height=0.38\textwidth,width=0.5\linewidth]{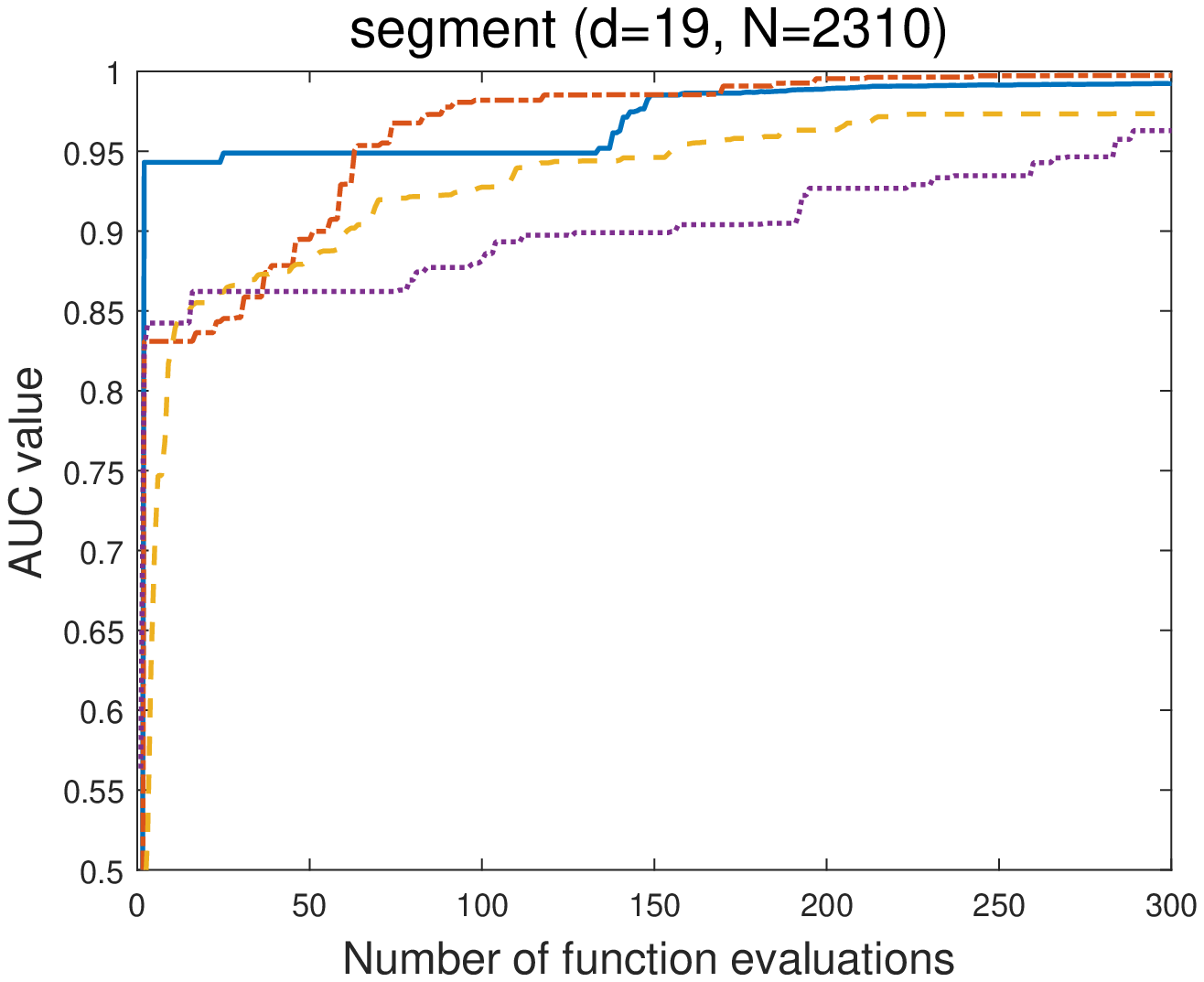}
%  \caption{segment(d=19, N=2310)}
  \label{fig:sfig5}
\end{subfigure}%
\begin{subfigure}
  \centering
  \includegraphics[height=0.38\textwidth,width=0.5\linewidth]{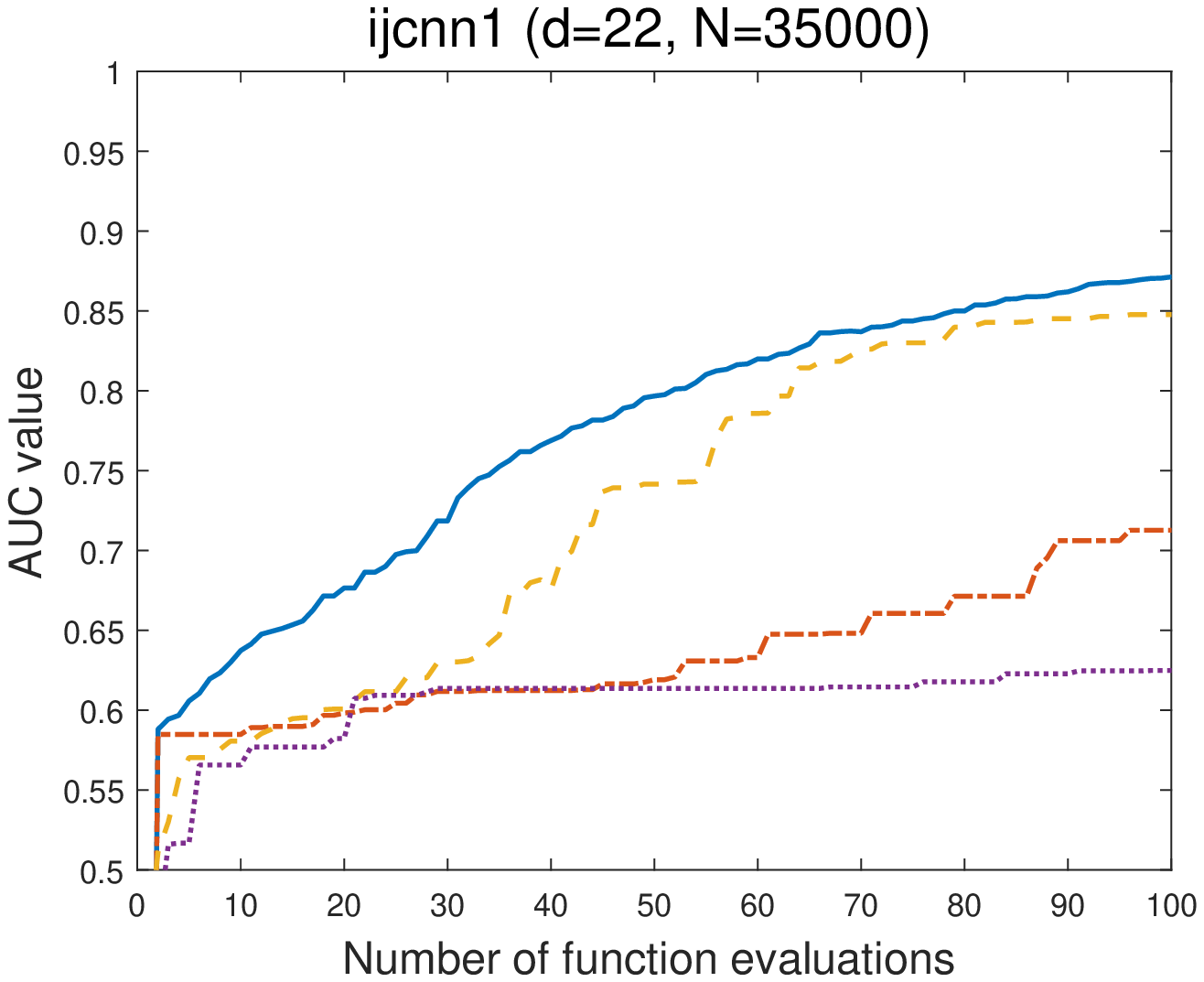}
 % \caption{ijcnn1(d=22, N=35000)}
  \label{fig:sfig6}
\end{subfigure} %
\caption{DFO-TR vs. BO algorithms (Second Part).}
\label{figap_2}
\end{figure*}

\end{document}